\documentclass{article}

\usepackage[preprint]{neurips_2026}

\usepackage{amsmath}
\usepackage{amssymb}        
\usepackage{amsfonts}       
\usepackage{mathtools}
\usepackage{amsthm}
\usepackage{enumitem}
\usepackage[utf8]{inputenc}
\usepackage[T1]{fontenc}
\usepackage{microtype}
\usepackage{nicefrac}

\usepackage{booktabs}
\usepackage{multirow}
\usepackage{array}
\usepackage{makecell}
\usepackage{longtable}
\usepackage{tabularx}
\usepackage{colortbl}
\usepackage[tableposition=top]{caption}

\usepackage{graphicx}
\usepackage{subcaption}
\usepackage{wrapfig}
\usepackage{float}
\usepackage[figuresleft]{rotating}
\usepackage[export]{adjustbox}

\usepackage{xcolor}
\definecolor{Gray}{gray}{0.92}
\definecolor{LightBlue}{rgb}{0.95, 0.96, 0.98}
\definecolor{highlightblue}{RGB}{235, 245, 255}
\definecolor{promptbgcolor}{RGB}{245,245,245}
\definecolor{promptframecolor}{RGB}{180,180,180}

\usepackage{algpseudocode}
\usepackage{algorithm}
\usepackage{listings}
\usepackage[cachedir=minted-cache]{minted}

\usepackage{hyperref}
\usepackage{url}
\usepackage[capitalize,noabbrev]{cleveref}

\usepackage{pifont}         
\usepackage[most,breakable]{tcolorbox}
\usepackage[textsize=tiny]{todonotes}

\newtcolorbox{promptbox}[2][]{
  colback=promptbgcolor,
  colframe=promptframecolor,
  fonttitle=\bfseries\small,
  title={#2},
  enhanced,
  breakable,
  fontupper=\small,
  attach boxed title to top left={yshift=-2mm, xshift=4mm},
  boxed title style={colback=promptframecolor, colframe=promptframecolor, coltitle=white},
  before upper={\setlength{\parskip}{3pt}},
  #1
}

\newcommand{\templatevar}[1]{\textcolor{blue}{\texttt{\{\{#1\}\}}}}

\theoremstyle{plain}
\newtheorem{theorem}{Theorem}[section]

\newtheorem{lemma}[theorem]{Lemma}

\theoremstyle{definition}

\theoremstyle{remark}

\title{From Static Context to Calibrated Interactive RL: Mitigating Distribution Shift in Multi-turn Dialogue with Aligned Simulator}

\author{
\centering
\begin{minipage}{0.95\textwidth}
\centering
Xiaohua Wang\textsuperscript{\thanks{These authors contributed equally.}} \quad
Jiakang Yuan\textsuperscript{\footnotemark[1]} \quad
Zisu Huang \quad
Muzhao Tian \quad
Changze Lv \\
Kaitao Song \quad
Chen Tao \quad
Xiaoqing Zheng\textsuperscript{\thanks{Corresponding author.}} \quad \\
\vspace{0.5em}
\normalfont{
\textsuperscript{}Fudan University \quad
\textsuperscript{}} \\
\vspace{0.5em}
\texttt{xhwang24@m.fudan.edu.cn}
\end{minipage}
}

\begin{document}

\maketitle
\begin{abstract}

A long-standing goal of the research community is to develop highly interactive LLM-based dialogue agents. Recent research focuses on optimizing policies based on fixed offline logs (Static Context RL) or using a prompt-based simulator (Interactive RL). In this work, we theoretically show that both paradigms are fundamentally limited by context distribution shift—a mismatch between dialogue histories observed during training and those encountered in real conversations. This shift compounds quadratically over turns and severely degrades dialogue quality. Specifically, we attribute this shift to two distinct sources: (i) policy-induced shift, arising from training on static histories rather than self-generated trajectories; and (ii) simulator-induced shift, stemming from discrepancies between simulated and real human behaviors. To address these challenges, we propose Calibrated Interactive RL, a unified framework that couples interactive RL with simulator alignment. By aligning the simulator with human interaction patterns, our approach reduces the sim-to-real gap and mitigates compounding distribution shifts. Experiments across multiple dialogue tasks confirm our theoretical analysis: (i) Interactive RL significantly outperforms the Static Context baseline by mitigating policy distribution shift; and (ii) calibrating simulators with our alignment method further bridges the sim-to-real gap, yielding state-of-the-art downstream performance.

\end{abstract}

\section{Introduction}
\label{sec:introduction}
A long-standing goal in the research community is to develop highly interactive LLM-based dialogue agents that can sustain coherent, safe, and goal-directed multi-turn conversations~\cite{li2025beyond,laban2025llms}. To advance this objective, recent work has increasingly adopted reinforcement learning (RL) to align the model's behavior with human preferences~\cite{ouyang2022training,bai2022training,bai2022constitutional}. Broadly, existing approaches can be categorized into two primary paradigms: \textbf{\textit{Static Context RL}}~\cite{lee2023rlaif,hong2023zero,verma2022chai,snell2022offline,rafailov2023direct} and \textbf{\textit{Interactive RL}}~\cite{wu2025collabllm,shani2024multi,carta2023grounding}. The former optimizes policy models using fixed offline contexts (\textit{i.e.}, serialized conversation histories), without the need to generate the preceding turns. In contrast, the latter addresses the inherently dynamic nature of dialogue by employing a prompt-based user simulator, thereby enabling agents to engage in multi-turn interactions within a closed-loop learning framework.

Despite their preliminary efficacy, these approaches face fundamental limitations as shown in Table~\ref{tab:paradigm_comparison}. On one hand, static context methods disrupt the essential RL interaction loop, thereby undermining the interactive capabilities of dialogue agents. On the other hand, while interactive approaches attempt to bridge this gap, they typically rely on prompt-based simulators that fail to capture the stochasticity, ambiguity, and noise inherent in real user inputs. This deficiency creates a significant sim-to-real gap, ultimately compromising the policy's applicability in real-world scenarios.

In this work, we establish the theoretical foundations for addressing these challenges. We formally demonstrate that existing paradigms are fundamentally constrained by context distribution shift—a discrepancy between dialogue histories observed during training and those encountered in real conversations. We show that this shift compounds quadratically over turns, severely degrading dialogue quality. Specifically, we decompose this phenomenon into two distinct sources: (i) \textit{\textbf{Policy-induced shift}} is caused by the mismatch between fixed offline histories and self-generated trajectories; and (ii) \textit{\textbf{Simulator-induced shift}} arises from the structural divergence between prompt-based simulators and real human behaviors. 

To this end, we introduce \textbf{\textit{Calibrated Interactive RL}}, a unified framework that couples interactive RL with the aligned simulator. Specifically, we first align the user simulator using Supervised Fine-Tuning (SFT) on real human multi-turn interactions, enabling it to capture realistic human behaviors such as clarification-seeking, stubbornness, and strict goal-adherence. Building upon this calibrated simulator, we leverage interactive RL to optimize the policy on self-generated trajectories, thereby explicitly addressing the policy-induced distribution shift and allowing the agent to learn essential error-recovery skills.

Empirically, our analysis reveals that standard prompt-based simulators exhibit severe behavioral artifacts, most notably \textit{sycophancy}—a tendency to blindly accept incorrect agent outputs or prematurely leak ground-truth information. This effectively trivializes the RL interaction loop and inevitably leads to reward hacking, where the policy learns to exploit the simulator's permissiveness rather than solving the actual task. By treating the simulator as a trainable agent and performing behavioral cloning on high-quality offline logs, our Simulator Alignment framework rigorously grounds the simulation in human reality.
Our experiments demonstrate that Interactive RL consistently outperforms the Static Context baseline and that training with our Aligned Simulators yields further significant gains. These results highlight the need for both closing the interaction loop and explicitly grounding simulations. 

Our contributions are summarized as follows:

\begin{itemize}[leftmargin=*]
\item We establish the theoretical foundation for multi-turn dialogue RL, reveal the source of the context distribution shift in existing RL methods, and mathematically demonstrate that these shifts compound quadratically over turns.
\item We propose \textbf{Calibrated Interactive RL}, a unified framework that couples interactive policy optimization with simulator alignment. We introduce an SFT-based calibration method that grounds the user simulator in real human behavioral distributions, eliminating simulation artifacts (\textit{e.g.}, sycophancy) and effectively bridging the sim-to-real gap.
\item Extensive experiments demonstrate that Interactive RL consistently outperforms Static Context baselines. Furthermore, explicitly calibrating the simulator yields significant additional gains, highlighting the critical importance of preventing reward hacking by grounding simulations in human reality.
\end{itemize}

\section{Related Work}


\begin{table*}[t]
    \centering
    \renewcommand{\arraystretch}{1.4} 
    \caption{\textbf{Comparison of RL Paradigms in Multi-turn Dialogue.} We contrast our proposed Calibrated Interactive RL with existing Static and Interactive baselines. Our framework is the only one that addresses both policy-induced ($\Delta_t$) and simulator-induced ($\delta_t$) distribution shifts.}
    \label{tab:paradigm_comparison}
    \vspace{2mm}
    
    \resizebox{\linewidth}{!}{%
    \begin{tabular}{l l c c m{7.5cm}}
        \toprule[1.5pt]
        \textbf{Paradigm} & \textbf{Training Context} & \textbf{Interaction Loop} & \textbf{Simulator Fidelity} & \textbf{Primary Limitation (Theoretical)} \\
        \midrule
        
        \textbf{Static Context RL} & Offline Logs (Fixed) & Open Loop & N/A 
        & \textbf{Policy Shift ($\Delta_t$):} Errors compound quadratically; no error recovery training. \\
        
        \textbf{Interactive RL} & On-Policy Trajectories & Closed Loop & Uncalibrated 
        & \textbf{Sim-to-Real Gap ($\delta_t$):} Simulator artifacts (e.g., sycophancy) mislead the policy. \\
        
        \rowcolor{LightBlue} 
        \textbf{Calibrated RL (Ours)} & On-Policy Trajectories & Closed Loop & \textbf{Aligned} 
        & \textbf{None:} Mitigates both shifts via joint interactive training and simulator alignment. \\
        \bottomrule[1.5pt]
    \end{tabular}
    }
\end{table*}

\subsection{Reinforcement Learning for Multi-turn Dialogue}

Reinforcement Learning from Human Feedback (RLHF) has established itself as the cornerstone for aligning LLMs with human intent~\cite{ouyang2022training, bai2022training}. In the context of dialogue systems, existing approaches can be broadly categorized into \textit{Static Context RL} and \textit{Interactive RL}. \textbf{\textit{Static Context RL}} optimizes policies based on static, offline conversation logs~\cite{shi2024direct,hong2023zero,snell2022offline}. For instance, Charlie et al.~\citep{snell2022offline} applies implicit Q-learning to fixed language datasets. Although effective for single-turn instruction following, these static context approaches suffer from distribution shift and the inability to correct generation errors that compound over turns—a known limitation in sequential decision-making~\cite{levine2020offline}. In contrast, \textbf{\textit{Interactive RL}} mainly utilizes a simulator to close the interaction loop and allows the agent to learn from its own generation trajectory~\cite{shani2024multi,zhou2024archer}. For example, CollabLLM~\cite{wu2025collabllm} utilizes a prompt-based user simulator to train agents on online collaborative traces. Distinct from existing approaches, we theoretically prove that existing methods suffer from context distribution shift between training and those encountered in real conversations. Besides, \textbf{\textit{Calibrated Interactive RL}} is introduced to bridge the sim-to-real gap by aligning the user simulator via discriminator-derived realism rewards.

\subsection{Distribution Shift and Exposure Bias}
The discrepancy between training on ground-truth history (\textit{i.e.}, teacher forcing) and generating on the model's own history (\textit{i.e.}, student forcing) is a fundamental challenge in sequence generation, known as exposure bias~\cite{bengio2015scheduled, ranzato2015sequence}. 
In the context of decision-making, this phenomenon parallels the distribution shift analyzed in imitation learning. Ross et al.~\cite{ross2011reduction} demonstrated that behavior cloning (\textit{i.e.}, offline training) results in errors that accumulate quadratically ($O(H^2)$) with the horizon $H$, whereas interactive expert guidance (\textit{i.e.}, \textsc{Dagger}) can reduce this to linear growth. 
While recent works have analyzed this shift in the context of single-turn reasoning chains~\cite{wang2023robustness}, our work formally extends this analysis to the multi-turn dialogue setting. We derive a rigorous bound illustrating how static context training fails to generalize to the dynamic distribution induced by the policy's own actions during prolonged interactions.

\subsection{LLM-based User Simulators}
Employing LLM-based simulators as surrogates for human users has emerged as a prominent trend for scalable evaluation and training~\cite{shi2019build,tseng2021transferable,hu2023unlocking,gandhi2026learning}. 
Park et al. \cite{park2023generative} demonstrated that LLM agents can simulate believable social behaviors, and automated evaluation frameworks like MT-Bench~\cite{bai2024mt} and AlpacaEval~\cite{dubois2023alpacafarm} utilize LLM judges to proxy human preferences. 
In the domain of interactive training, recent studies have employed "Self-Play" or simulator-based RL to improve task performance~\cite{shao2023character,chen2024self,silver2017mastering}. 
However, uncalibrated simulators often exhibit distinct artifacts, such as "sycophancy" (agreeing with the model's errors)~\cite{sharma2023towards} or unrealistic persona consistency~\cite{shanahan2023role}. These artifacts can be exploited by the policy agent during RL training, leading to reward hacking~\cite{wang2025reward} where the agent optimizes for simulator approval rather than genuine task success.
Unlike previous works that use off-the-shelf LLMs as simulators, we propose a \textit{Simulator Alignment} framework, treating the simulator as a trainable agent optimized to minimize the sim-to-real gap, thereby ensuring that interactive training yields robust real-world performance.

\section{Distribution Shift in Multi-turn RL}
\label{sec:theory_main}
\begin{figure*}[t]
  \begin{center}
    \centerline{\includegraphics[width=1.0\linewidth]{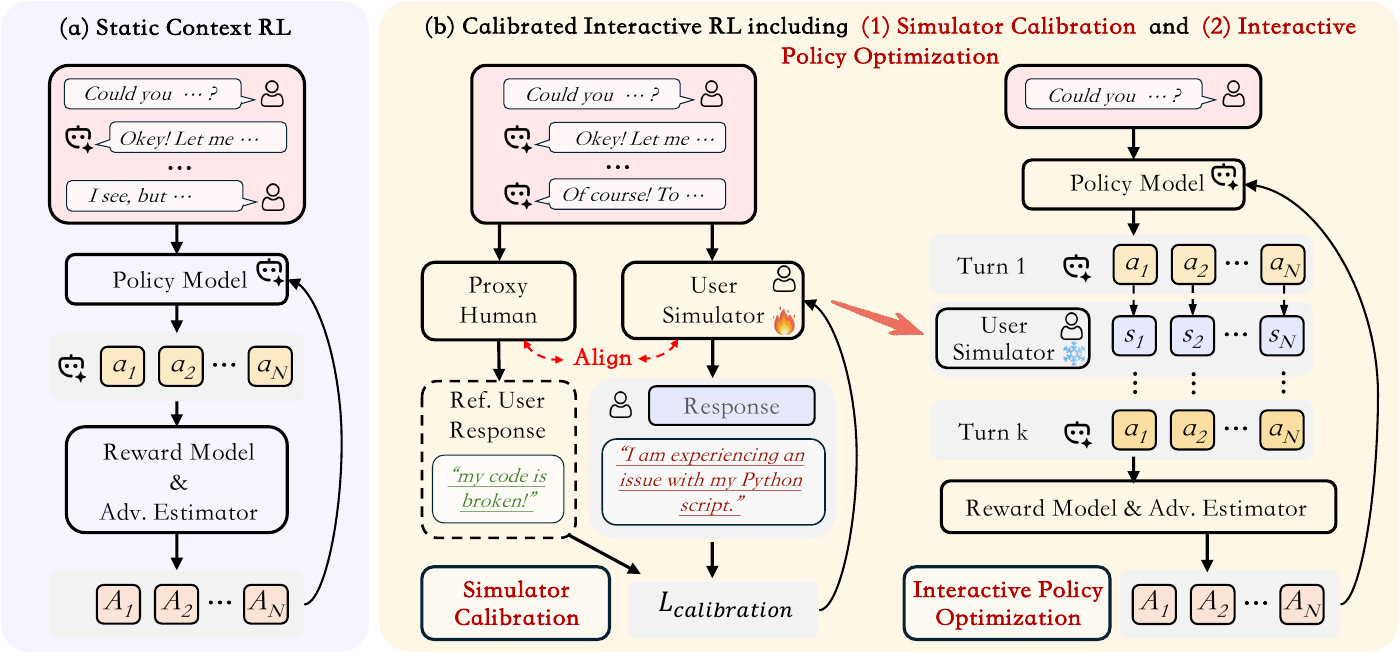}}
    \caption{\textbf{Comparison of reinforcement learning frameworks for multi-turn dialogue.} (a) Pseudo Multi-turn RL optimizes the policy using static offline conversation logs, where the model conditions its actions on fixed historical contexts without generating the preceding turns, leading to distribution shift. (b) True Multi-turn RL (Ours) closes the interaction loop through a two-stage process: (1) User Simulator Training, where the simulator is explicitly aligned with human reference data to minimize the simulator–human gap; and (2) Policy Training, where the agent is optimized via dynamic, multi-turn interactions with the aligned simulator, ensuring robustness to compounding errors.}
    \label{fig:fig_1}
  \end{center}
  \vspace{-5mm}
\end{figure*}

In this section, we formally analyze the distribution shifts inherent in multi-turn dialogue training. We derive two fundamental bounds that motivate our proposed \textbf{Calibrated Interactive RL} framework. First, we show that standard offline training (\textit{i.e.}, \textit{Static Context RL}) induces a biased objective due to compounding policy deviations. Second, we demonstrate that while \textit{Interactive RL} addresses this by closing the interaction loop, it relies on simulator fidelity, as simulator errors also accumulate quadratically. These insights underscore the necessity of our Simulator Alignment approach. Detailed proofs are provided in Appendix~\ref{app:theory-proofs}.

\subsection{Setup and Notation}
We consider an episodic dialogue environment with a finite horizon $H$. Let $s_t \in \mathcal{S}$ denote the environment state and $a_t \in \mathcal{A}$ the dialogue agent's action at time $t$. A history $h_t = (s_0, a_0, \dots, s_t)$ represents the trajectory up to time $t$. Policies map histories to distributions over actions; we denote the behavior policy that generated the offline logs as $\pi_{\text{data}}$, and the learned policy as $\pi$.

Let $d^\pi_t(h)$ be the marginal distribution over histories at time $t$ induced by policy $\pi$. To analyze the simulation gap, we distinguish between the transition dynamics of the real user, denoted by the operator $\mathcal{T}_{\text{real}}$, and the user simulator, $\mathcal{T}_{\text{sim}}$. We assume rewards are bounded such that $|r(s, a)| \le R_{\max}$. The objective is to maximize the finite-horizon return $J(\pi) = \mathbb{E}_{\tau \sim p_\pi}\left[\sum_{t=0}^{H-1} r_t\right]$.

Based on the notation, \textbf{Static Context RL} decouples policy optimization from state generation. Its updates and gradient estimators are computed using contexts $h \sim d^{\text{data}}_t$ sampled from the offline distribution $d^{\text{data}}_t$, rather than the on-policy distribution $d^\pi_t$. Consequently, the optimization is fundamentally off-policy with respect to the state distribution, as it ignores how $\pi$ influences future histories.
On the contrary, \textbf{Interactive RL} maintains the closed interaction loop. The objective is evaluated under the distribution $h \sim d^\pi_t$ induced by the policy's interaction with the environment, thereby capturing the consequences of its own actions.

\subsection{Error Source I: Compounding Policy Shift in Static Context RL}
We first quantify the performance degradation caused by training on static contexts. Let $\pi'$ denote a reference policy (\textit{e.g.}, the data-generating policy $\pi_{\text{data}}$) and $\pi$ denote the learned policy. We define two divergence terms: the \textit{history distribution mismatch} $\Delta_t = \|d^\pi_t - d^{\pi'}_t\|_1$, and the \textit{per-step policy deviation} $\varepsilon_t = \mathbb{E}_{h \sim d^{\pi'}_t} \|\pi(\cdot \mid h) - \pi'(\cdot \mid h)\|_1$.

\begin{theorem}[Performance Difference via Policy Shift]
\label{thm:policy_shift}
For any two policies $\pi, \pi'$ and finite horizon $H$, the difference in expected return is bounded by:
\begin{equation}
\small
|J(\pi) - J(\pi')| \le R_{\max} \sum_{t=0}^{H-1} (\Delta_t + \varepsilon_t).
\end{equation}
Furthermore, since $\Delta_t \le \sum_{i=0}^{t-1} \varepsilon_i$, the total error satisfies:
\begin{equation}
\small
|J(\pi) - J(\pi')| \le R_{\max} \sum_{i=0}^{H-1} (H - i) \varepsilon_i.
\end{equation}
\end{theorem}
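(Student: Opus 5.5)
We write the return as a sum of per-step expected rewards and compare the two policies one time step at a time. Let $d^\pi_t(h,a) = d^\pi_t(h)\,\pi(a\mid h)$ denote the joint distribution over histories and actions at step $t$. Then
\[
J(\pi) - J(\pi') = \sum_{t=0}^{H-1} \Bigl( \mathbb{E}_{(h,a)\sim d^\pi_t}[r] - \mathbb{E}_{(h,a)\sim d^{\pi'}_t}[r] \Bigr).
\]
Since $|r|\le R_{\max}$, each summand is at most $R_{\max}\,\|d^\pi_t(\cdot,\cdot) - d^{\pi'}_t(\cdot,\cdot)\|_1$. We use the $\|\cdot\|_1$ pairing without a factor $1/2$, which matches the stated constant.

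\textbf{Bounding the joint term.} The key step is a triangle inequality through the hybrid distribution $d^{\pi}_t(h)\pi'(a\mid h)$. Alternatively, one can use the hybrid $d^{\pi'}_t(h)\pi(a\mid h)$:
\[
\|d^\pi_t\pi - d^{\pi'}_t\pi'\|_1 \le \|d^\pi_t\pi - d^{\pi'}_t\pi\|_1 + \|d^{\pi'}_t\pi - d^{\pi'}_t\pi'\|_1 = \Delta_t + \varepsilon_t .
\]
The first term collapses to $\Delta_t$ because $\pi(\cdot\mid h)$ sums to one for every $h$. The second term is exactly $\mathbb{E}_{h\sim d^{\pi'}_t}\|\pi(\cdot\mid h)-\pi'(\cdot\mid h)\|_1 = \varepsilon_t$. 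Choosing this hybrid, with the action distribution of $\pi$ attached to $d^{\pi'}_t$, is what makes the expectation land on $d^{\pi'}_t$, as in the definition of $\varepsilon_t$. Summing over $t$ gives the first inequality.

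\textbf{Recursive bound on $\Delta_t$.} We prove $\Delta_t \le \sum_{i<t}\varepsilon_i$ by induction. The base case is $\Delta_0 = 0$, since both policies share the initial state distribution. For the inductive step, $h_{t+1}=(h_t,a_t,s_{t+1})$ is produced from $d_t(h)\pi(a\mid h)$ by the same environment kernel $\mathcal{T}_{\text{real}}(s_{t+1}\mid h_t,a_t)$ under both policies. A Markov kernel is non-expansive in $\ell_1$, so
\[
\Delta_{t+1} \le \|d^\pi_t\pi - d^{\pi'}_t\pi'\|_1 \le \Delta_t + \varepsilon_t ,
\]
reusing the decomposition from the previous step. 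Unrolling yields the claim.

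\textbf{Final bound.} Substituting into the first inequality gives
\[
\sum_{t=0}^{H-1}\Bigl(\varepsilon_t + \sum_{i<t}\varepsilon_i\Bigr) = \sum_{i=0}^{H-1}\varepsilon_i\bigl(1 + (H-1-i)\bigr) = \sum_{i=0}^{H-1}(H-i)\,\varepsilon_i ,
\]
by exchanging the order of summation.

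\textbf{Main obstacle.} The delicate point is the inductive step. It requires the environment transition to be identical for both policies, with the same $\mathcal{T}$, and to depend only on $(h_t,a_t)$. It also requires care with the hybrid ordering so that $\varepsilon_t$ is measured under $d^{\pi'}_t$ rather than $d^\pi_t$. I would state the shared-dynamics and shared-initial-distribution assumptions explicitly, and verify the data-processing (contraction) inequality for the kernel. Everything else is bookkeeping.
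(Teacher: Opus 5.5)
Your proof is correct and follows the paper's argument step for step: the hybrid $d^{\pi'}_t(h)\pi(a\mid h)$ you settle on is exactly the one in the paper's joint-decomposition lemma, and the per-step H\"older bound, the non-expansiveness of the shared transition kernel giving $\Delta_{t+1}\le\Delta_t+\varepsilon_t$, and the final exchange of summation are all the same. The only wobble is in the wording: the first hybrid you mention, $d^{\pi}_t(h)\pi'(a\mid h)$, would put the policy-deviation expectation under $d^{\pi}_t$ rather than $d^{\pi'}_t$, so it does not produce $\varepsilon_t$ as defined, and presenting it as an interchangeable ``alternative'' is inaccurate; the hybrid you actually use is the right one.
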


\textbf{Proof Sketch.} We decompose the return difference into a sum over time of divergences in the joint (history, action) distributions. As the environment transition operator is non-expansive in the $\ell_1$-norm, distribution errors propagate linearly, yielding $\Delta_{t+1} \le \Delta_t + \varepsilon_t$. Summing this recurrence yields the quadratic bound. Detailed proofs are provided in \cref{app:theory-proofs}.

\textbf{Theoretical Implication.} Theorem~\ref{thm:policy_shift} establishes that errors in Static Context RL compound quadratically ($O(H^2)$). Small deviations $\varepsilon_t$ in the policy at early turns accumulate into a large distribution shift $\Delta_t$, pushing the agent into out-of-distribution states where it has never been trained. Static Context RL ignores $\Delta_t$, optimizing only for local accuracy, and thus fails to guarantee long-horizon interactive performance.

\textbf{Intuitive Explanation \& Running Example.} To intuitively understand this quadratic compounding effect, consider a conversational math tutoring scenario. Suppose the agent makes a minor logical error at turn $t=1$ (a small policy deviation $\varepsilon_1$). In a real interaction, the subsequent dialogue history will fundamentally diverge from the ground-truth offline logs (creating a large history mismatch $\Delta_2$), because the human user will likely point out the mistake or express confusion. However, a model trained purely via Static Context RL has only seen perfect, expert-guided trajectory histories. When thrust into this self-generated, flawed context during deployment, the agent lacks the essential \textit{error-recovery} skills. Instead of correcting itself, it is prone to hallucinate further (yielding large $\varepsilon_2, \varepsilon_3, \dots$). Consequently, an initially negligible mistake cascades dynamically over the horizon $H$, leading to catastrophic failure. Interactive RL mitigates this by explicitly exposing the agent to its own mistakes during training, thereby teaching it how to navigate back to the correct path.

\subsection{Error Source II: Simulator-Real Distribution Shift}
While Interactive RL resolves the policy distribution shift by training on $d^\pi_t$, it introduces a new source of error: the discrepancy between the simulator $\mathcal{T}_{\text{sim}}$ and the real user $\mathcal{T}_{\text{real}}$. To isolate the impact of simulator fidelity, we analyze the gap for a fixed policy $\pi$. We define the \textit{simulator error} at time $t$ as the expected divergence in transition dynamics under the real joint distribution: $\delta_t = \mathbb{E}_{(h,a) \sim q^{\text{real}}_t} \|\mathcal{T}_{\text{real}}(\cdot|h, a) - \mathcal{T}_{\text{sim}}(\cdot|h, a)\|_1$, where $q^{\text{real}}_t(h,a) = d^{\pi}_{\text{real},t}(h)\pi(a|h)$.

\begin{theorem}[Sim-to-Real Gap]
\label{thm:sim_gap}
For any policy $\pi$ and horizon $H$, the performance gap caused by the simulator is bounded by:
\vspace{-3mm}
\begin{equation}
\small
|J_{\text{real}}(\pi) - J_{\text{sim}}(\pi)| \le R_{\max} \sum_{i=0}^{H-1} (H - i) \delta_i.
\end{equation}
\vspace{-5mm}
\end{theorem}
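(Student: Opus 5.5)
The plan mirrors the proof of Theorem~\ref{thm:policy_shift}, with the roles of policy and dynamics swapped. The policy $\pi$ is held fixed and we compare the two history marginals it induces, $d^{\pi}_{\text{real},t}$ under $\mathcal{T}_{\text{real}}$ and $d^{\pi}_{\text{sim},t}$ under $\mathcal{T}_{\text{sim}}$. We assume a common initial state distribution, so $d^{\pi}_{\text{real},0}=d^{\pi}_{\text{sim},0}$. Define $D_t=\|d^{\pi}_{\text{real},t}-d^{\pi}_{\text{sim},t}\|_1$. Since rewards depend only on $(s_t,a_t)$, both returns are sums over $t$ of expectations of $r$ under the joint laws $q^{\text{real}}_t(h,a)=d^{\pi}_{\text{real},t}(h)\pi(a|h)$ and $q^{\text{sim}}_t$. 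Using $|r|\le R_{\max}$ and Hölder,
\[
|J_{\text{real}}(\pi)-J_{\text{sim}}(\pi)|\le R_{\max}\sum_{t=0}^{H-1}\|q^{\text{real}}_t-q^{\text{sim}}_t\|_1 = R_{\max}\sum_{t=0}^{H-1} D_t ,
\]
where the last equality holds because both joints apply the same kernel $\pi(\cdot|h)$. Reading the $\ell_1$ norms as total variation would even gain a factor $1/2$, which we discard.

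The key step is the one-step recursion $D_{t+1}\le D_t+\delta_t$. Write $d_{\text{real},t+1}=\mathcal{T}_{\text{real}}q^{\text{real}}_t$ and $d_{\text{sim},t+1}=\mathcal{T}_{\text{sim}}q^{\text{sim}}_t$. Adding and subtracting $\mathcal{T}_{\text{sim}}q^{\text{real}}_t$ splits the difference into two terms:
\[
\|(\mathcal{T}_{\text{real}}-\mathcal{T}_{\text{sim}})q^{\text{real}}_t\|_1 + \|\mathcal{T}_{\text{sim}}(q^{\text{real}}_t-q^{\text{sim}}_t)\|_1 .
\]
The first term is at most $\mathbb{E}_{(h,a)\sim q^{\text{real}}_t}\|\mathcal{T}_{\text{real}}(\cdot|h,a)-\mathcal{T}_{\text{sim}}(\cdot|h,a)\|_1=\delta_t$, by Jensen/triangle inequality. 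This is exactly why $\delta_t$ was defined under the real joint distribution, and it fixes the order of the add-and-subtract. The second term is at most $\|q^{\text{real}}_t-q^{\text{sim}}_t\|_1=D_t$, because a Markov kernel is non-expansive in $\ell_1$; this is the same fact used in Theorem~\ref{thm:policy_shift}. One technical point needs care. The next history $h_{t+1}=(h_t,a_t,s_{t+1})$ carries the whole past, so the kernel maps $(h,a)$ to the extended history, and we must check that the $\ell_1$ distance over extended histories equals the mixture bound above. It does, since the old coordinates are copied deterministically.

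Unrolling from $D_0=0$ gives $D_t\le\sum_{i=0}^{t-1}\delta_i$. Summing over $t$ and swapping the order of summation,
\[
\sum_{t=0}^{H-1}D_t\le\sum_{i=0}^{H-2}(H-1-i)\delta_i\le\sum_{i=0}^{H-1}(H-i)\delta_i ,
\]
which is the claimed bound. The $(H-i)$ weighting is loose by one, and it also absorbs any reward-model mismatch that appears if the simulator's reward is allowed to depend on the next state. The step I expect to need the most care is the recursion: getting the add-and-subtract in the right order so that the simulator error is evaluated under $q^{\text{real}}_t$ and not $q^{\text{sim}}_t$, and justifying non-expansiveness on the growing history space. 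The rest is bookkeeping identical to Theorem~\ref{thm:policy_shift}.
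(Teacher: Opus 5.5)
Your proof is correct and is essentially the paper's argument: H\"older on the per-step joint laws, the add-and-subtract of $\mathcal{T}_{\text{sim}}q^{\text{real}}_t$ so that the local term is exactly $\delta_t$ and the propagated term is at most $\Delta^{\text{env}}_t$ by non-expansiveness, and telescoping from $\Delta^{\text{env}}_0=0$. The only difference is the first step, and there your version is the better one. The paper bounds the step-$t$ reward gap by $R_{\max}(\Delta^{\text{env}}_t+\delta_t)$, appealing informally to the reward depending on the next state, and that extra term is what produces exactly the $(H-i)$ weights; your bound $R_{\max}D_t$ is the rigorous one for $r=r(h,a)$ and gives the sharper $(H-1-i)$ weights. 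Your side remark that total variation would gain a factor $1/2$ is mistaken for signed rewards $|r|\le R_{\max}$, since $R_{\max}\|\cdot\|_1=2R_{\max}\,\mathrm{TV}$ is already tight there, but you discard it, so the proof is unaffected.
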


\textbf{Proof Sketch} We upper-bound the performance gap between the real and simulated environments by aggregating the marginal distribution shifts over the horizon. Leveraging the non-expansive nature of stochastic transition operators under the $\ell_1$-norm, we show that the distribution divergence at any step is bounded by the sum of the propagated error from the previous step and the immediate local simulation error. This leads to the recurrence $\Delta_{t+1}^{\text{env}} \leq \Delta_t^{\text{env}} + \delta_t$, which telescopes to yield the final cumulative bound. The complete derivation is detailed in \cref{app:theory-proofs}.

\begin{wrapfigure}[22]{R}{0.54\textwidth} 
\begin{minipage}{\linewidth}
\vspace{-10mm}
\begin{algorithm}[H] 
   \caption{Calibrated Interactive RL}
   \label{alg:calibrated_rl}
\begin{algorithmic}[1]
   \State {\bfseries Input:} Multi-turn data $\mathcal{D}_{\text{multi}}$, Task data $\mathcal{D}$
   \State {\bfseries Initialize:} Simulator $\pi^{\text{sim}}_\phi$, Policy $\pi_\theta$
   
   \State \textcolor{gray}{\textit{// Phase 1: Simulator Calibration (SFT)}}
   \While{not converged}
       \State Sample batch $(x, y^*, \tau_{\text{human}})$ from $\mathcal{D}_{\text{multi}}$
       \State Compute $\mathcal{L}_{\text{SFT}}(\phi)$ via NLL on user turns $u_t$
       \State Update $\phi$ via Gradient Descent
   \EndWhile
   \State Freeze aligned simulator $\pi^{\text{sim}}_{\phi^*}$
   
   \State \textcolor{gray}{\textit{// Phase 2: Interactive Policy Opt.}}
   \For{each training epoch}
       \State Sample context $(x, y^*)$ from $\mathcal{D}$
       \State Initialize history $h_0 = x$
       \For{$t = 0$ to $H-1$}
           \State Agent: $a_t \sim \pi_\theta(\cdot \mid h_t)$
           \State Simulator: $u_t \sim \pi^{\text{sim}}_{\phi^*}(\cdot \mid[x, y^*, h_t, a_t])$
           \State Update history $h_{t+1} \leftarrow h_t \cup \{a_t, u_t\}$
       \EndFor
       \State Final reward $R = \text{Score}(a_{H-1}, y^*)$
       \State Update policy $\theta$ via GRPO to maximize $R$
   \EndFor
\end{algorithmic}
\end{algorithm}
\end{minipage}
\end{wrapfigure}

\textbf{Theoretical Implication.} This result highlights that simulator misalignment is as detrimental as policy misalignment. Even if interactive training is used, a simulator that deviates from human behavior (i.e., $\delta_t > 0$) causes the training objective to diverge from the real-world objective at a rate of $O(H^2 \delta)$. This formally justifies our \textbf{Simulator Alignment} approach: minimizing $\delta_t$ is a theoretical prerequisite for robust interactive learning.

\textbf{Intuitive Explanation \& Running Example.} To see why standard interactive training is insufficient without simulator calibration, consider the well-documented behavioral artifacts of uncalibrated LLM simulators, such as ``sycophancy'' (blind agreement)~\cite{sharma2023towards}. Suppose the dialogue agent proposes a highly flawed document edit. A real human user would reject the edit and demand a revision (representing a specific transition in $\mathcal{T}_{\text{real}}$). In contrast, a naive, uncalibrated simulator might enthusiastically accept the flawed edit, complimenting the agent (a severely divergent transition in $\mathcal{T}_{\text{sim}}$, yielding a large local error $\delta_t$). By interacting with this flawed simulator over multiple turns, the agent receives false-positive feedback and learns a degenerate policy that exploits the simulator's permissiveness rather than actually solving the task---a phenomenon closely related to \textit{reward hacking}~\cite{wang2025reward}. As shown by our bound, this simulator-induced delusion accumulates quadratically over the conversation length $H$. Therefore, aligning the simulator to accurately reflect human stubbornness, constraints, and feedback patterns is not merely an engineering trick, but a fundamental mathematical requirement to ensure that the surrogate objective $J_{\text{sim}}(\pi)$ faithfully proxies the true real-world objective $J_{\text{real}}(\pi)$.




\section{Calibrated Interactive RL}
\label{sec:method}

Motivated by the compounding distribution shifts identified in \cref{thm:policy_shift} and \cref{thm:sim_gap}, it is evident that a robust multi-turn dialogue agent must simultaneously address both policy-induced and simulator-induced errors. To this end, we propose \textbf{Calibrated Interactive RL}. As shown in \cref{fig:fig_1} (b), our method operates in two distinct phases: (1) \textbf{Simulator Calibration}, where we train a goal-guided user simulator via Supervised Fine-Tuning (SFT) to mimic human interaction patterns; and (2) \textbf{Interactive Policy Optimization}, where the dialogue agent is trained within this calibrated environment using memory-efficient RL to maximize task success.

\subsection{Phase I: Simulator Calibration via Supervised Fine-Tuning (SFT)}

The primary bottleneck in standard interactive RL is the \textit{simulator-induced shift} ($\delta_t$). Conventional prompt-based simulators frequently fail to exhibit the specific intent, cognitive blind spots, or "stubbornness" of real users. To minimize this sim-to-real gap, we formulate the simulator as a trainable agent and explicitly align it with human behavioral distributions using historical multi-turn data.

\textbf{Goal-Guided Simulation Context.} Unlike the dialogue policy, which must infer the user's intent dynamically, the user simulator functions as an environmental "Teacher" and thus requires access to the full ground truth. Let $\mathcal{D} = \{(x, y^*, h_{\text{real}})\}$ denote a dataset containing the initial instruction $x$, the ground-truth solution $y^*$, and the reference human history $h_{\text{real}}$. We condition the simulator $\pi_\phi^{\text{sim}}$ on an oracle context $I_t^{\text{sim}} = [x, y^*, h_t, a_t]$. This privileged information allows the simulator to consistently verify whether the agent’s current action $a_t$ aligns with the target $y^*$ and to provide feedback or corrections consistent with the reference intent, without hallucinating diverging goals.

\textbf{Calibration via Behavioral Cloning.} To ensure the simulator accurately reflects human transition dynamics $\mathcal{T}_{\text{real}}$, we optimize the simulator parameters $\phi$ using Supervised Fine-Tuning (SFT) on the real user turns from $\mathcal{D}$. Specifically, we minimize the negative log-likelihood (NLL) of the reference human responses $u_t$ given the oracle context and the dialogue history up to turn $t$:
\vspace{-1mm}
\begin{equation}
\small
\mathcal{L}_{\text{Calibration}}(\phi) = -\mathbb{E}_{(x,y^*,h_{\text{real}}, a_{\text{real}}) \sim \mathcal{D}} \left[ \sum_{t} \log \pi_{\phi}^{\text{sim}}(u_t \mid x, y^*, h_t, a_t) \right]
\end{equation}
By conditioning on both the ground-truth and the agent's actual output the simulator learns to provide grounded feedback, such as asking for clarification, expressing partial understanding, or persistently rejecting incorrect solutions.

\textbf{Mitigating Simulator Artifacts and Reward Hacking.} The motivation for SFT calibration extends beyond mere behavioral realism; it also provides a critical defense against \textit{reward hacking}\cite{wang2025reward}. Uncalibrated simulators frequently exhibit "sycophancy" (blindly agreeing with flawed logic\cite{sharma2023towards}), which RL agents rapidly exploit by generating confident but incorrect responses to trick the simulator into emitting unearned success signals. By rigorously grounding the simulator in actual human data, we eliminate these exploitable artifacts, ensuring the policy must make genuine progress to achieve high rewards.


\subsection{Phase II: Interactive Policy Optimization}
\label{sec:policy_opt}

With the calibrated simulator $\pi^{\text{sim}}_{\phi^*}$ frozen, we proceed to optimize the dialogue policy $\pi_\theta$. This phase closes the interaction loop to address the \textit{policy-induced shift} ($\Delta_t$).


\textbf{On-Policy Trajectory Generation.} 
In contrast to Static Context RL, our agent generates full multi-turn trajectories by interacting dynamically with the aligned simulator. For an episode of length $H$, the joint trajectory $\tau = (s_0, a_0, u_0, \dots, a_{H-1}, u_{H-1})$ is generated via alternating execution, as summarized in Algorithm~\ref{alg:calibrated_rl}. This ensures that the policy $\pi_\theta$ is trained on the exact state distribution $d^{\pi_\theta}_t$ it dynamically visits, allowing it to naturally develop robust error-recovery and long-horizon planning strategies that are fundamentally unlearnable from static, off-policy logs.

\textbf{Sparse Task Rewards.} 
The quality of the interaction is measured solely by outcome success. We utilize a sparse reward signal $R_{\text{task}}$ administered only at the final turn $H$. Let $\hat{y}$ be the final output produced by the agent (e.g., the refined document or final code):
\begin{equation}
\small
    R(\tau) = \begin{cases} 
    \text{Score}(\hat{y}, y^*) & \text{at } t = H \\
    0 & \text{at } t < H
    \end{cases}
\end{equation}
For document editing tasks, $\text{Score}(\cdot)$ is the BLEU metric; for math and coding tasks, it is a binary correctness indicator. This outcome-based supervision forces the agent to engage in meaningful information seeking. The policy $\pi_\theta$ is then optimized using Group Relative Policy Optimization (GRPO)~\cite{shao2024deepseekmath} to maximize the expected return $\mathbb{E}_{\tau}[R(\tau)]$.


\section{Experiments}
\label{sec:experiments}

We conduct extensive experiments to validate two core hypotheses: (1) \textbf{Interactive RL} significantly outperforms Static Context RL by mitigating the distribution shift between offline training contexts and on-policy generation; and (2) \textbf{Simulator Alignment} is critical for interactive training—optimizing the simulator to mimic human intent and natural interaction patterns leads to superior downstream policy performance.

\subsection{Experimental Setup}

\paragraph{Datasets and Benchmarks.} 
To rigorously evaluate the dynamic interaction capabilities of our framework, we employ two challenging multi-turn dialogue benchmarks inspired by recent collaborative agent evaluations~\cite{wu2025collabllm}. These datasets are explicitly designed to test long-horizon planning, ambiguity resolution, and error recovery:
\begin{itemize}[leftmargin=*]
    \item \textbf{MediumDocEdit-Chat (Collaborative Editing):} Derived from document editing corpora, this task requires the agent to help a user iteratively refine a document. Unlike single-turn instruction following, user intents in this dataset are frequently ambiguous or incomplete. The agent must proactively ask clarifying questions, propose partial edits, and adapt to shifting user feedback over multiple turns. The primary objective evaluation metric is \textbf{BLEU}, calculated between the final model-refined document and the ground-truth target text.
    \item \textbf{MATH-Chat (Conversational Reasoning):} Adapted from the MATH dataset, this benchmark evaluates the agent's ability to act as an interactive tutor. Instead of directly outputting the final answer, the agent must guide a simulated student (who may make logical errors or express confusion) step-by-step toward the correct solution. This requires robust reasoning scaffolding and the ability to correct trajectory deviations dynamically. The task success is measured by the final answer \textbf{Accuracy (Acc.)}.
\end{itemize}

\paragraph{Models and Baselines.}
To ensure a fair and controlled comparison, we utilize \textbf{Gemma-3-4B-IT}~\cite{team2024gemma} as the foundational backbone for both the policy agent and the user simulator across all experiments. We systematically compare our full \textit{Calibrated Interactive RL} framework against established paradigms and internal ablations, seamlessly mirroring the theoretical sources of distribution shift (as summarized in Table~\ref{tab:results}):
\begin{itemize}[leftmargin=*]
    \item \textbf{Gemma-3-4B-IT (Base Model):} The original instruction-tuned model without any multi-turn RL fine-tuning, acting as our zero-shot baseline.
    \item \textbf{\textsc{CollabLLM} (Static RL):} The state-of-the-art offline multi-turn RL method~\cite{wu2025collabllm}. We faithfully reproduce this baseline using the official codebase, optimizing the Gemma-3 policy on static conversational logs using Multi-turn-aware Rewards. This represents the upper bound of the \textit{Static Context RL} paradigm.
    \item \textbf{Static Context:} An ablation of our method where the policy is trained exclusively on fixed offline history distributions ($h \sim d^{\text{data}}$) without dynamic simulator rollouts.
    \item \textbf{Ours (Naive Interactive):} An ablation utilizing the full Interactive RL loop, but employing a basic, uncalibrated user simulator (prompting the Base Qwen2.5-7B-Instruct~\cite{qwen2.5} model without Phase I SFT). This isolates the impact of closing the interaction loop from the simulator's behavioral fidelity.
    \item \textbf{Ours (Calibrated Interactive):} Our complete proposed framework, featuring on-policy trajectory generation against the Phase I SFT-aligned user simulator.
    \item \textbf{Oracle (Proxy Human):} An evaluation upper-bound representing the performance of the base policy when interacting with a privileged proxy human simulator that has direct access to the ground-truth targets.
\end{itemize}

\paragraph{Implementation Details.}
In Phase I, the user simulator (Qwen2.5-7B-Instruct) is trained via SFT on 1{,}860 (MATH) and 16{,}028 (MediumDocEdit) multi-turn trajectories collected from an oracle Qwen3-235B~\cite{yang2025qwen3} simulator. In Phase II, the policy (Gemma-3-4B-IT) is optimized via GRPO on 32 GPUs with a group size of $G=32$ samples per prompt and a global batch size of 8{,}192. We use a constant learning rate of $5 \times 10^{-6}$ with Adam ($\beta_1{=}0.9, \beta_2{=}0.98$) and asymmetric PPO-style clipping~\cite{schulman2017proximal} ($\epsilon \in [0.2, 0.28]$). Full hyperparameter tables are provided in Appendix~\ref{app:implementation}.

\paragraph{Evaluation Metrics.}
Following the evaluation protocol of \textsc{CollabLLM}, we assess performance along two primary dimensions. (1)~\textbf{Task Performance}: Measured by BLEU for editing and Accuracy for math reasoning. (2)~\textbf{Efficiency}: We report the average generation length (\textbf{\#Toks}, in thousands of tokens), where lower values indicate more concise, natural conversational turns rather than monolithic document dumps. We additionally report the \textbf{Average Score} (arithmetic mean of BLEU and ACC) as a unified metric across domains. All experiments are repeated 3 times with different random seeds; we report mean $\pm$ standard deviation.

\providecommand{\std}[1]{{\scriptsize$\pm$#1}}
\begin{table*}[t]
    \centering
    \renewcommand{\arraystretch}{1.25}
    \setlength{\tabcolsep}{7pt}
    \caption{\textbf{Main Results on Multi-turn Collaboration Benchmarks.} We evaluate the Gemma-3-4B-IT policy across two domains: MediumDocEdit (Editing) and MATH-Chat (Reasoning). Each experiment is repeated 3 times; we report mean{\scriptsize$\pm$std}. The blue row highlights our full \textbf{Calibrated Interactive RL} framework.}
    \label{tab:results}
    \resizebox{1.0\textwidth}{!}{
    \begin{tabular}{l l cc cc cc}
        \toprule[1.5pt]
        \multirow{2}{*}{\textbf{Method}}
        & \multirow{2}{*}{\textbf{Paradigm}}
        & \multicolumn{2}{c}{\textbf{MediumDocEdit-Chat}}
        & \multicolumn{2}{c}{\textbf{MATH-Chat}}
        & \multicolumn{2}{c}{\textbf{Average}} \\
        \cmidrule(lr){3-4} \cmidrule(lr){5-6} \cmidrule(lr){7-8}
        & & BLEU $\uparrow$
        & \#Toks\,$(k)\!\downarrow$
        & ACC $\uparrow$
        & \#Toks\,$(k)\!\downarrow$
        & Score $\uparrow$
        & \#Toks\,$(k)\!\downarrow$ \\
        \midrule

        \multicolumn{8}{l}{\textit{\textbf{Baselines}}} \\
        Gemma-3-4B-IT
        & Base Model
        & 32.2\std{1.5} & 2.47\std{0.12}
        & 82.3\std{0.6} & 1.76\std{0.04}
        & 57.3 & 2.12 \\

        \textsc{CollabLLM}$^\dagger$
        & Offline DPO
        & 33.1\std{0.4} & 2.38\std{0.07}
        & 82.3\std{1.9} & 1.63\std{0.09}
        & 57.7 & \textbf{2.01} \\

        \midrule

        \multicolumn{8}{l}{\textit{\textbf{Ours (Gemma-3-4B-IT backbone)}}} \\
        Static Context
        & Static RL
        & 33.8\std{1.8} & 3.09\std{0.01}
        & 85.0\std{0.0} & \textbf{1.63\std{0.07}}
        & 59.4 & 2.36 \\

        Ours (Naive Interactive)
        & Interactive RL
        & 26.1\std{1.2} & 4.00\std{0.07}
        & 89.3\std{3.4} & 1.97\std{0.05}
        & 57.7 & 2.98 \\

        \rowcolor{LightBlue}
        \textbf{Ours (Calibrated Interactive)}
        & \textbf{Calibrated RL}
        & \textbf{34.6\std{1.0}} & \textbf{2.27\std{0.08}}
        & \textbf{91.5\std{1.3}} & 1.86\std{0.09}
        & \textbf{63.05} & 2.06 \\

        \midrule
        \multicolumn{8}{l}{\textit{\textbf{Reference}}} \\
        Oracle (Proxy Human)
        & --
        & 36.2\std{1.4} & 1.83\std{0.06}
        & 89.7\std{1.6} & 1.27\std{0.04}
        & 63.0 & 1.55 \\

        \bottomrule[1.5pt]
    \end{tabular}
    }
    \vspace{2pt}

    {\footnotesize $^\dagger$\textsc{CollabLLM}~\cite{wu2025collabllm} reproduced with offline DPO on the same Gemma-3-4B-IT backbone.}
    \vspace{-10pt}
\end{table*}

\subsection{Experimental Results}

Table~\ref{tab:results} summarizes the comprehensive results across both domains. Our proposed framework achieves the highest MATH-Chat accuracy (\textbf{91.5\%}), substantially outperforming all baselines and even surpassing the Oracle reference (89.7\%). In the following, we provide a detailed analysis of these empirical findings.

\paragraph{Interactive RL Beats Static Context RL.}
Closing the interaction loop significantly improves task success: MATH-Chat accuracy rises from 85.0\% (Static Context) to 89.3\% (Naive Interactive), a gain of +4.3 points. This validates our theoretical analysis (\cref{sec:theory_main}): optimizing the policy on dynamically induced state distributions ($h \sim d^{\pi}$) rather than fixed offline logs equips the agent with robust error-recovery skills and mitigates compounding distribution shifts. Qualitatively, interactive agents successfully rectify logical missteps dynamically, whereas static agents often hallucinate when user feedback deviates from their offline training distribution.
To provide deeper intuition into the behavioral improvements, Figure \ref{fig:case_study} illustrates a qualitative analysis of successfully resolved cases on MATH-Chat. Our Calibrated Interactive RL agent effectively avoids degenerate "try-and-fail" loops, arriving at correct solutions in significantly fewer dialogue turns. We refer readers to Appendix \ref{app:case_study} for comprehensive step-by-step case studies demonstrating the model's enhanced structural understanding and proactive error-recovery strategies.

\paragraph{Calibrated Simulators are Essential for Interaction.}
Merely closing the interaction loop is insufficient; aligning the simulator with human behavior yields further substantial gains. On MATH-Chat, calibrated training achieves \textbf{91.5\%} accuracy---a +2.2 improvement over the Naive Interactive baseline (89.3\%) and notably surpassing even the Oracle (89.7\%). This suggests our aligned simulator provides a highly effective and complementary training signal: by calibrating the simulator to maintain realistic persistence and avoid sycophantic behavior, the policy learns more robust reasoning strategies that generalize even beyond what a single proxy human would elicit.

\paragraph{Comparison with \textsc{CollabLLM}.}
The state-of-the-art static baseline (\textsc{CollabLLM}, offline DPO) performs comparably to the vanilla base model on MATH-Chat (82.3\% vs.\ 82.3\%). This highlights a fundamental limitation of offline RL in multi-turn settings: without on-policy experience, the agent never observes how its responses shape subsequent dialogue turns. The superiority of all our interactive methods confirms that \emph{on-policy experience generation} is a critical prerequisite for robust multi-turn optimization.

\begin{figure}[t]
    \centering
    \includegraphics[width=\linewidth]{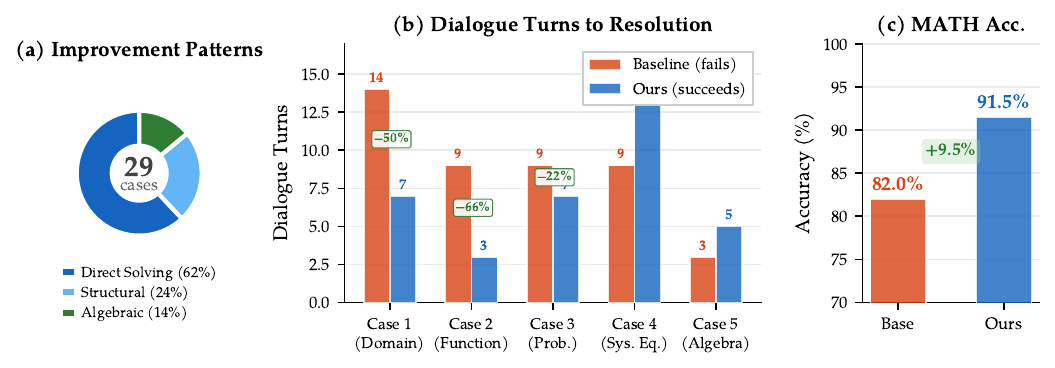}
    \caption{\textbf{Qualitative analysis of improved cases on MATH-Chat.} (a) Distribution of improvement patterns across 29 cases where the baseline fails but our method succeeds. (b) Dialogue turns required to resolve each case---our model achieves correct answers in up to 67\% fewer turns. (c) Overall accuracy improvement: 82.3\% $\rightarrow$ 91.5\%. Detailed case studies are provided in Appendix~\ref{app:case_study}. Note: While Table \ref{tab:results} reports the mean and standard deviation across three independent runs, the qualitative patterns and case studies presented here are derived from a single representative trial to maintain trajectory consistency.}
    \label{fig:case_study}
    \vspace{-5mm}
\end{figure}

\section{Conclusion}
\label{sec:conclusion}

In this work, we established the theoretical foundations for the limitations of Static Context RL, formally proving that training on fixed offline logs induces a context distribution shift where errors compound quadratically over turns. To address this fundamental bottleneck, we proposed \textbf{Calibrated Interactive RL}, a unified framework that couples interactive training with rigorous simulator alignment. We demonstrated that simply closing the interaction loop is insufficient; the success of interactive learning critically depends on minimizing the sim-to-real gap. By treating the user simulator as a trainable agent and optimizing it to maximize a discriminator-derived realism reward, we effectively aligned the simulation environment with human interaction patterns. Our extensive experiments confirm that our framework not only mitigates policy-induced distribution shifts but also fosters robust information-seeking behaviors, significantly outperforming baselines relying on static contexts or uncalibrated simulators. This work highlights the necessity of high-fidelity, aligned simulation as a prerequisite for the next generation of interactive dialogue agents.
\nocite{langley00}

\newpage


\bibliography{example_paper}
\bibliographystyle{plainnat}

\appendix
\onecolumn

\section{Theoretical Proofs and Derivations}
\label{app:theory-proofs}

In this appendix, we provide full proofs for the theoretical claims presented in \cref{sec:theory_main}. We first restate the necessary notation and assumptions, then establish auxiliary lemmas regarding distributional divergence, and finally derive the bounds for Policy Mismatch (\cref{thm:policy_shift}) and the Sim-to-Real Gap (\cref{thm:sim_gap}).

\subsection{Notation and Preliminaries}
\label{app:notation}

We consider a finite-horizon Markov Decision Process (MDP) defined over histories to account for the non-Markovian nature of dialogue when viewed from the perspective of state-only transitions.
\begin{itemize}
    \item \textbf{Horizon:} $H \in \mathbb{N}$. Time steps are indexed by $t = 0, \dots, H-1$.
    \item \textbf{History:} Let $h_t = (s_0, a_0, \dots, s_t) \in \mathcal{H}$ denote the interaction history up to time $t$.
    \item \textbf{Distributions:} Let $d^\pi_t(h)$ denote the marginal distribution over histories at time $t$ induced by policy $\pi$ and the environment dynamics.
    \item \textbf{Norms:} Unless otherwise specified, $\|\cdot\|_1$ denotes the $\ell_1$-norm (total variation distance) for probability distributions. For a function $f$, $\|f\|_\infty = \sup_x |f(x)|$.
    \item \textbf{Rewards:} We assume a bounded reward function $r(h, a)$ such that $|r(h, a)| \le R_{\max}$ for all $h, a$.
\end{itemize}

\subsection{Auxiliary Lemmas}
We begin by establishing a decomposition lemma that bounds the divergence of a joint distribution in terms of the marginal distribution divergence and the conditional distribution divergence.

\begin{lemma}[Joint Distribution Decomposition]
\label{lemma:joint_decomp}
Consider two joint distributions $p(x, y) = p(x)p(y|x)$ and $q(x, y) = q(x)q(y|x)$ over discrete domains $\mathcal{X}, \mathcal{Y}$. Then:
\begin{equation}
    \|p - q\|_1 \le \|p(x) - q(x)\|_1 + \mathbb{E}_{x \sim q(x)} [\|p(y|x) - q(y|x)\|_1].
\end{equation}
\end{lemma}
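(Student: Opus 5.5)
The plan is a direct computation followed by an add-and-subtract (hybrid) step. Write the joint $\ell_1$ distance as a sum over pairs:
\begin{equation*}
\|p-q\|_1 = \sum_{x,y} \bigl| p(x)p(y|x) - q(x)q(y|x) \bigr| .
\end{equation*}
I will insert the intermediate term $q(x)p(y|x)$, which pairs the marginal of $q$ with the conditional of $p$. This choice is what makes the expectation come out under $q(x)$, as the statement requires. It gives
\begin{equation*}
p(x)p(y|x) - q(x)q(y|x) = \bigl(p(x)-q(x)\bigr)p(y|x) + q(x)\bigl(p(y|x)-q(y|x)\bigr).
\end{equation*}

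Next, apply the triangle inequality termwise and sum. For the first term, summing $|p(x)-q(x)|\,p(y|x)$ over $y$ uses $\sum_y p(y|x)=1$ and leaves $\sum_x |p(x)-q(x)| = \|p(x)-q(x)\|_1$. For the second term, summing $q(x)\,|p(y|x)-q(y|x)|$ over $y$ and then over $x$ gives exactly $\mathbb{E}_{x\sim q(x)}\|p(\cdot|x)-q(\cdot|x)\|_1$. Adding the two bounds yields the lemma.

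The only point needing care is where $p(y|x)$ is undefined because $p(x)=0$. In that case I fix any probability distribution as $p(\cdot|x)$. The first term is then well defined, and the product $p(x)p(y|x)$ is unaffected since it is zero. For countable domains, the rearrangement of sums is justified by absolute convergence, since all summands are nonnegative and bounded by probabilities. I do not expect a genuine obstacle; the key is choosing the hybrid term so the expectation is taken under $q$ rather than $p$.
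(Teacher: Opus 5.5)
Your proposal is correct and follows essentially the same route as the paper: inserting the hybrid term $q(x)p(y|x)$, applying the triangle inequality, and using $\sum_y p(y|x)=1$ to obtain the marginal term plus the expectation under $q$. Your remarks on fixing $p(\cdot|x)$ arbitrarily when $p(x)=0$ and on rearranging nonnegative sums over countable domains are small additions the paper leaves implicit.
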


\begin{proof}
By definition of the $\ell_1$-norm:
\begin{align*}
    \|p - q\|_1 &= \sum_{x,y} |p(x)p(y|x) - q(x)q(y|x)| \\
    &= \sum_{x,y} |p(x)p(y|x) - q(x)p(y|x) + q(x)p(y|x) - q(x)q(y|x)| \\
    &\le \sum_{x,y} |p(x) - q(x)|p(y|x) + \sum_{x,y} q(x)|p(y|x) - q(y|x)| \\
    &= \sum_{x} |p(x) - q(x)| \underbrace{\sum_{y} p(y|x)}_{=1} + \sum_{x} q(x) \sum_{y} |p(y|x) - q(y|x)| \\
    &= \|p(x) - q(x)\|_1 + \mathbb{E}_{x \sim q(x)} [\|p(y|x) - q(y|x)\|_1].
\end{align*}
\end{proof}

\begin{lemma}[Error Propagation Dynamics]
\label{lemma:error_prop}
Let $\mathcal{T}$ be the environment transition operator mapping the joint distribution at time $t$ to the marginal history distribution at $t+1$. Let $\Delta_t = \|d^\pi_t - d^{\pi'}_t\|_1$ and $\varepsilon_t = \mathbb{E}_{h \sim d^{\pi'}_t} \|\pi(\cdot|h) - \pi'(\cdot|h)\|_1$. Assuming the start state distribution is fixed ($\Delta_0 = 0$), then for all $t \ge 0$:
\begin{equation}
    \Delta_{t+1} \le \Delta_t + \varepsilon_t \quad \implies \quad \Delta_t \le \sum_{i=0}^{t-1} \varepsilon_i.
\end{equation}
\end{lemma}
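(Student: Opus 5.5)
The plan is to write the history distribution at time $t+1$ as the push-forward of a joint (history, action) distribution at time $t$ under the common environment kernel. Then Lemma~\ref{lemma:joint_decomp} and the non-expansiveness of Markov kernels in $\ell_1$ give the one-step recurrence. Concretely, define the joint distributions $q^\pi_t(h,a) = d^\pi_t(h)\,\pi(a\mid h)$ and $q^{\pi'}_t(h,a) = d^{\pi'}_t(h)\,\pi'(a\mid h)$. Both policies interact with the same environment, so
\begin{equation*}
d^\pi_{t+1}(h') = \sum_{h,a} q^\pi_t(h,a)\,\mathcal{T}(h' \mid h,a), \qquad d^{\pi'}_{t+1}(h') = \sum_{h,a} q^{\pi'}_t(h,a)\,\mathcal{T}(h' \mid h,a),
\end{equation*}
with the same kernel $\mathcal{T}$. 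Here $h' = (h,a,s_{t+1})$ is the appended history.

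First I will show that $\mathcal{T}$ is non-expansive. By the triangle inequality,
\begin{equation*}
\|d^\pi_{t+1} - d^{\pi'}_{t+1}\|_1 \le \sum_{h,a} |q^\pi_t(h,a) - q^{\pi'}_t(h,a)| \sum_{h'} \mathcal{T}(h'\mid h,a) = \|q^\pi_t - q^{\pi'}_t\|_1,
\end{equation*}
since each row of $\mathcal{T}$ sums to one. (Because $h'$ determines $(h,a)$, this actually holds with equality, but the inequality is all we need.)

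Second, I apply Lemma~\ref{lemma:joint_decomp} with $x = h$, $y = a$, $p = q^\pi_t$ and $q = q^{\pi'}_t$. The lemma takes the expectation under the second argument's marginal, which here is $d^{\pi'}_t$. This gives
\begin{equation*}
\|q^\pi_t - q^{\pi'}_t\|_1 \le \|d^\pi_t - d^{\pi'}_t\|_1 + \mathbb{E}_{h\sim d^{\pi'}_t}\|\pi(\cdot\mid h) - \pi'(\cdot\mid h)\|_1 = \Delta_t + \varepsilon_t.
\end{equation*}
The one point needing care is matching the roles of $p$ and $q$ in the lemma to the definition of $\varepsilon_t$. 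Because $\varepsilon_t$ is measured under $d^{\pi'}_t$, the decomposition must add and subtract $d^{\pi'}_t(h)\,\pi(a\mid h)$, so that the marginal difference is weighted by $\pi$ (which sums to one) and the policy difference is weighted by $d^{\pi'}_t$. That is exactly the form Lemma~\ref{lemma:joint_decomp} provides, so no symmetrization is required. Combining the two steps yields $\Delta_{t+1} \le \Delta_t + \varepsilon_t$.

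Finally, I unroll the recurrence by induction on $t$. The base case is $\Delta_0 = 0$, since both policies share the fixed start-state distribution. If $\Delta_t \le \sum_{i=0}^{t-1}\varepsilon_i$, then $\Delta_{t+1} \le \sum_{i=0}^{t-1}\varepsilon_i + \varepsilon_t = \sum_{i=0}^{t}\varepsilon_i$, which completes the induction and gives the stated bound. For non-discrete spaces the same argument goes through with sums replaced by integrals against densities, but I will state and prove it for the discrete setting used in Lemma~\ref{lemma:joint_decomp}.
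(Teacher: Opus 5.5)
Your proposal is correct and follows the paper's proof: push the joint $q_t$ forward through the shared stochastic kernel, which is non-expansive in $\ell_1$, then apply Lemma~\ref{lemma:joint_decomp} with the expectation taken under $d^{\pi'}_t$ to get $\Delta_t+\varepsilon_t$, and telescope from $\Delta_0=0$. You spell out two things the paper only asserts: the non-expansiveness computation and the induction. Your remark that equality holds, because $h'$ determines $(h,a)$, is also correct.
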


\begin{proof}
Let $q^\pi_t(h, a) = d^\pi_t(h)\pi(a|h)$ be the joint distribution at time $t$. The transition to the next history $h_{t+1}$ is deterministic given $h_t, a_t$ and the stochastic environment transition $s_{t+1} \sim P(\cdot|s_t, a_t)$. The operator mapping $q_t \to d_{t+1}$ is a linear map represented by a stochastic matrix, making it non-expansive in the $\ell_1$-norm. Thus:
\begin{equation*}
    \Delta_{t+1} = \|d^\pi_{t+1} - d^{\pi'}_{t+1}\|_1 \le \|q^\pi_t - q^{\pi'}_t\|_1.
\end{equation*}
Applying Lemma~\ref{lemma:joint_decomp} to the RHS:
\begin{equation*}
    \|q^\pi_t - q^{\pi'}_t\|_1 \le \|d^\pi_t - d^{\pi'}_t\|_1 + \mathbb{E}_{h \sim d^{\pi'}_t} \|\pi(\cdot|h) - \pi'(\cdot|h)\|_1 = \Delta_t + \varepsilon_t.
\end{equation*}
Combining these gives the recurrence $\Delta_{t+1} \le \Delta_t + \varepsilon_t$. Telescoping the sum from $i=0$ to $t-1$ with $\Delta_0=0$ yields the result.
\end{proof}

\subsection{Proof of Theorem~\ref{thm:policy_shift} (Performance Difference via Policy Shift)}

\textbf{Theorem~\ref{thm:policy_shift}.} \textit{For any two policies $\pi, \pi'$ over finite horizon $H$, the difference in expected return is bounded by:}
\begin{equation*}
    |J(\pi) - J(\pi')| \le R_{\max} \sum_{i=0}^{H-1} (H - i) \varepsilon_i.
\end{equation*}

\begin{proof}
The difference in expected return can be written as the sum of differences in expected rewards at each time step. Let $q^\pi_t$ denote the joint distribution of $(h, a)$ at time $t$ under policy $\pi$.
\begin{align*}
    |J(\pi) - J(\pi')| &= \left| \sum_{t=0}^{H-1} \left( \mathbb{E}_{(h,a) \sim q^\pi_t}[r(h,a)] - \mathbb{E}_{(h,a) \sim q^{\pi'}_t}[r(h,a)] \right) \right| \\
    &\le \sum_{t=0}^{H-1} \left| \sum_{h,a} (q^\pi_t(h,a) - q^{\pi'}_t(h,a))r(h,a) \right|.
\end{align*}
Using Hölder's inequality (bounding the expectation difference by the $\ell_1$ distance and max reward):
\begin{equation*}
    \left| \sum_{h,a} (q^\pi_t(h,a) - q^{\pi'}_t(h,a))r(h,a) \right| \le R_{\max} \|q^\pi_t - q^{\pi'}_t\|_1.
\end{equation*}
From the proof of Lemma~\ref{lemma:error_prop}, we know $\|q^\pi_t - q^{\pi'}_t\|_1 \le \Delta_t + \varepsilon_t$. Thus:
\begin{equation*}
    |J(\pi) - J(\pi')| \le R_{\max} \sum_{t=0}^{H-1} (\Delta_t + \varepsilon_t).
\end{equation*}
Substituting $\Delta_t \le \sum_{j=0}^{t-1} \varepsilon_j$:
\begin{equation*}
    \sum_{t=0}^{H-1} (\Delta_t + \varepsilon_t) = \sum_{t=0}^{H-1} \left( \sum_{j=0}^{t-1} \varepsilon_j + \varepsilon_t \right) = \sum_{t=0}^{H-1} \sum_{j=0}^{t} \varepsilon_j.
\end{equation*}
By changing the order of summation, the term $\varepsilon_i$ appears for every $t \ge i$:
\begin{equation*}
    \sum_{t=0}^{H-1} \sum_{j=0}^{t} \varepsilon_j = \sum_{i=0}^{H-1} (H - i)\varepsilon_i.
\end{equation*}
Multiplying by $R_{\max}$ concludes the proof.
\end{proof}

\subsection{Proof of Theorem~\ref{thm:sim_gap} (Sim-to-Real Gap)}

\textbf{Theorem~\ref{thm:sim_gap}.} \textit{For any fixed policy $\pi$, let $\delta_t = \mathbb{E}_{(h,a) \sim q^{\text{real}}_t} \|\mathcal{T}_{\text{real}}(\cdot|h, a) - \mathcal{T}_{\text{sim}}(\cdot|h, a)\|_1$. Then:}
\begin{equation*}
    |J_{\text{real}}(\pi) - J_{\text{sim}}(\pi)| \le R_{\max} \sum_{i=0}^{H-1} (H - i) \delta_i.
\end{equation*}

\begin{proof}
Let $d^{\text{real}}_t$ and $d^{\text{sim}}_t$ denote the history distributions induced by $\pi$ in the real and simulated environments, respectively. Let $\Delta^{\text{env}}_t = \|d^{\text{real}}_t - d^{\text{sim}}_t\|_1$.

Since the policy $\pi$ is fixed in both environments, the joint distributions satisfy $q^{\text{real}}_t(h,a) = d^{\text{real}}_t(h)\pi(a|h)$ and $q^{\text{sim}}_t(h,a) = d^{\text{sim}}_t(h)\pi(a|h)$, so $\|q^{\text{real}}_t - q^{\text{sim}}_t\|_1 = \Delta^{\text{env}}_t$.

The value difference is bounded by the cumulative divergence in joint distributions plus the local transition errors:
\begin{equation*}
    |J_{\text{real}}(\pi) - J_{\text{sim}}(\pi)| \le R_{\max} \sum_{t=0}^{H-1} (\Delta^{\text{env}}_t + \delta_t).
\end{equation*}
This follows because at each step $t$, the reward difference depends on the state at $t+1$, which is influenced by both the history mismatch $\Delta^{\text{env}}_t$ and the current transition error $\delta_t$.

We now analyze the recurrence of $\Delta^{\text{env}}_t$. The history distribution at $t+1$ is obtained by applying the respective transition operators.
\begin{align*}
    \Delta^{\text{env}}_{t+1} &= \|\mathcal{T}_{\text{real}}(q^{\text{real}}_t) - \mathcal{T}_{\text{sim}}(q^{\text{sim}}_t)\|_1 \\
    &= \|\mathcal{T}_{\text{real}}(q^{\text{real}}_t) - \mathcal{T}_{\text{sim}}(q^{\text{real}}_t) + \mathcal{T}_{\text{sim}}(q^{\text{real}}_t) - \mathcal{T}_{\text{sim}}(q^{\text{sim}}_t)\|_1 \\
    &\le \underbrace{\|\mathcal{T}_{\text{real}}(q^{\text{real}}_t) - \mathcal{T}_{\text{sim}}(q^{\text{real}}_t)\|_1}_{(A)} + \underbrace{\|\mathcal{T}_{\text{sim}}(q^{\text{real}}_t) - \mathcal{T}_{\text{sim}}(q^{\text{sim}}_t)\|_1}_{(B)}.
\end{align*}

\textbf{Term (A) - Local Simulator Error:} This measures the error of the simulator on the \textit{real} distribution.
\begin{align*}
    (A) &= \sum_{h', h, a} |P_{\text{real}}(h'|h, a)q^{\text{real}}_t(h, a) - P_{\text{sim}}(h'|h, a)q^{\text{real}}_t(h, a)| \\
    &= \sum_{h, a} q^{\text{real}}_t(h, a) \|P_{\text{real}}(\cdot|h, a) - P_{\text{sim}}(\cdot|h, a)\|_1 \\
    &= \mathbb{E}_{(h,a) \sim q^{\text{real}}_t} [\|\mathcal{T}_{\text{real}}(\cdot|h, a) - \mathcal{T}_{\text{sim}}(\cdot|h, a)\|_1] = \delta_t.
\end{align*}

\textbf{Term (B) - Error Propagation:} Since $\mathcal{T}_{\text{sim}}$ is a stochastic map, it is non-expansive.
\begin{align*}
    (B) &\le \|q^{\text{real}}_t - q^{\text{sim}}_t\|_1 \\
    &= \sum_{h,a} |d^{\text{real}}_t(h)\pi(a|h) - d^{\text{sim}}_t(h)\pi(a|h)| \\
    &= \sum_{h} |d^{\text{real}}_t(h) - d^{\text{sim}}_t(h)| \underbrace{\sum_{a} \pi(a|h)}_{1} = \Delta^{\text{env}}_t.
\end{align*}

Thus, we have the recurrence $\Delta^{\text{env}}_{t+1} \le \Delta^{\text{env}}_t + \delta_t$. Assuming $\Delta^{\text{env}}_0 = 0$, this unfolds to $\Delta^{\text{env}}_t \le \sum_{i=0}^{t-1} \delta_i$. Substituting this back:
\begin{equation*}
    \sum_{t=0}^{H-1} (\Delta^{\text{env}}_t + \delta_t) = \sum_{t=0}^{H-1}\left(\sum_{i=0}^{t-1}\delta_i + \delta_t\right) = \sum_{t=0}^{H-1}\sum_{j=0}^{t}\delta_j = \sum_{i=0}^{H-1}(H-i)\delta_i.
\end{equation*}
Multiplying by $R_{\max}$ concludes the proof.
\end{proof}

\section{Experimental Details}
\label{app:experimental_details}

In this section, we provide a comprehensive description of the datasets, model configurations, training hyperparameters, and computational infrastructure used in our experiments. Our implementation builds on the SLIME distributed RL framework and the CollabLLM evaluation pipeline.

\subsection{Datasets and Benchmarks}

We evaluate our proposed framework on two distinct multi-turn dialogue domains that require long-horizon planning and precise information seeking.

\paragraph{MATH-Chat.}
This benchmark evaluates the reasoning and tutoring capability of the agent in a conversational math problem-solving scenario.
\begin{itemize}[leftmargin=*, nosep]
    \item \textbf{Data Source:} We adapt problems from the MATH dataset~\cite{hendrycks2021measuring}, specifically selecting \textit{Hard}-level problems that require multi-step reasoning.
    \item \textbf{Task Setup:} For each math problem, the ground-truth solution serves as $y^*$. The user simulator plays the role of a student who may express confusion, make logical errors, or ask for clarification, requiring the agent to guide them step-by-step rather than directly revealing the answer.
    \item \textbf{Statistics:} The RL training set comprises ${\sim}$3{,}000 problems (stored in Parquet format). Evaluation is conducted on a held-out set of 200 problems, with results averaged over 3 independent runs.
    \item \textbf{Reward:} Binary accuracy---an LLM judge (Qwen3-235B) determines whether the agent's final answer matches the ground truth ($R \in \{0, 1\}$).
\end{itemize}

\paragraph{MediumDocEdit-Chat.}
This dataset focuses on collaborative document editing, requiring the agent to iteratively refine a document based on user feedback.
\begin{itemize}[leftmargin=*, nosep]
    \item \textbf{Data Source:} Derived from article writing and editing corpora. Each sample consists of an initial user request $x$ and a ground-truth refined document $y^*$.
    \item \textbf{Task Setup:} The user provides ambiguous or incomplete editing instructions, requiring the agent to proactively ask clarifying questions and propose partial edits across multiple turns.
    \item \textbf{Statistics:} The RL training set contains ${\sim}$38{,}118 editing prompts (Parquet format). Evaluation uses a held-out set of 200 samples, averaged over 3 runs.
    \item \textbf{Reward:} Sentence-level BLEU score between the agent's final produced document and the ground-truth reference $y^*$ ($R \in [0, 1]$).
\end{itemize}

\subsection{Model Architectures}

\paragraph{Policy Agent ($\pi_\theta$).}
We use \textbf{Gemma-3-4B-IT}~\cite{team2024gemma} as the backbone for the dialogue policy across all experiments. This model is chosen for its strong instruction-following capabilities at a compact 4B parameter scale, which enables efficient on-policy rollout generation during interactive RL training. The model is loaded in \texttt{bfloat16} precision with Flash Attention 2 enabled.

\paragraph{User Simulator ($\pi_\phi^{\text{sim}}$).}
We use \textbf{Qwen2.5-7B-Instruct}~\cite{qwen2.5} as the simulator backbone. The larger 7B scale (relative to the 4B policy) ensures the simulator has sufficient capacity to model diverse and realistic user behaviors. After Phase I SFT calibration, the simulator is frozen for Phase II training.

\paragraph{Evaluation Simulator \& Judge.}
During evaluation, we employ \textbf{Qwen3-235B-A22B-Instruct} (FP8 quantized) as both the user simulator and the accuracy/quality judge. This ensures evaluation is \textit{independent} of the training-time simulator, providing an unbiased assessment of the policy's generalization.

\subsection{Training Phase I: Simulator Calibration (SFT)}

In the first phase, we calibrate the user simulator via Supervised Fine-Tuning (SFT) on multi-turn interaction traces collected from the Qwen3-235B oracle simulator interacting with the base policy model. This is a behavioral cloning approach---the simulator learns to replicate realistic human-like interaction patterns.

\paragraph{Data Collection.}
We deploy the Qwen3-235B model as an oracle user simulator (conditioned on ground-truth goals) and collect multi-turn trajectories by having it interact with the Gemma-3-4B-IT base model. This produces:
\begin{itemize}[leftmargin=*, nosep]
    \item \textbf{MATH-Chat:} 1{,}860 multi-turn trajectories (max sequence length: 4{,}096 tokens).
    \item \textbf{MediumDocEdit-Chat:} 16{,}028 multi-turn trajectories (max sequence length: 6{,}144 tokens).
\end{itemize}

\paragraph{Training Setup.}
We fine-tune Qwen2.5-7B-Instruct using standard causal language modeling loss on the user turns only (masking assistant turns). Training uses FSDP on $8\times$ NVIDIA A800-80GB GPUs. Hyperparameters are detailed in Appendix~\ref{app:implementation}, Table~\ref{tab:sft_hyperparams}.

\subsection{Training Phase II: Interactive Policy Optimization (GRPO)}

In the second phase, the SFT-calibrated simulator is frozen, and we optimize the policy agent $\pi_\theta$ via Group Relative Policy Optimization (GRPO)~\cite{shao2024deepseekmath}. GRPO eliminates the need for a separate value network by normalizing rewards within a group of sampled trajectories, making it memory-efficient for long-context multi-turn training.

\paragraph{Interaction Protocol.}
For each training prompt $x$, the agent generates $G=32$ independent multi-turn trajectories by interacting with the aligned simulator. Each episode runs for up to $H=7$ dialogue rounds (14 total turns). The simulator generates the first user message and the episode terminates upon the simulator emitting a \texttt{[[TERMINATE CHAT]]} signal or reaching the maximum turn count.

\paragraph{Reward Function.}
We use sparse outcome rewards $R(\tau)$ administered only at episode completion:
\begin{itemize}[leftmargin=*, nosep]
    \item \textbf{MATH-Chat:} Binary accuracy (0/1) determined by an LLM judge comparing the final answer to ground truth.
    \item \textbf{MediumDocEdit-Chat:} BLEU score between the final document and the reference.
\end{itemize}

\paragraph{Training Setup.}
We train with Ray-based distributed FSDP on 32 GPUs (4 nodes $\times$ 8 A800-80GB) with SGLang for efficient rollout generation. The policy and rollout engine are co-located on the same GPUs (\texttt{--colocate} mode). Full hyperparameters are in Appendix~\ref{app:implementation}, Table~\ref{tab:grpo_hyperparams}.

\subsection{Baselines Implementation}

\paragraph{Gemma-3-4B-IT (Base Model).} The original instruction-tuned model without any multi-turn RL fine-tuning, serving as the zero-shot baseline. We evaluate it with the same proactive system prompt used during RL training.

\paragraph{CollabLLM (Static RL).}
We reproduce CollabLLM~\cite{wu2025collabllm} using the official codebase. CollabLLM represents the state-of-the-art in \textit{Static Context RL}: it trains the policy on fixed offline conversation logs using Multi-turn-aware Rewards (MR). To ensure a fair comparison, we adapt CollabLLM to use Gemma-3-4B-IT as the backbone (matching our method), rather than the models used in their original paper. The reward model and training procedure follow their official implementation.

\paragraph{Ours (Static Context).}
An ablation of our framework where the policy is trained via GRPO on static offline trajectories \textit{without} dynamic simulator interaction. The policy optimizes against fixed conversation logs collected from the oracle simulator, conditioning on offline history $h \sim d^{\text{data}}_t$ rather than self-generated trajectories. This isolates the effect of closing the interaction loop.

\paragraph{Ours (Naive Interactive).}
An ablation using the full interactive RL loop but with an uncalibrated user simulator. Specifically, we use the base Qwen2.5-7B-Instruct model (without Phase I SFT) prompted to act as a user. This isolates the impact of simulator calibration from the benefit of interactive training.

\subsection{Computational Infrastructure}

\begin{itemize}[leftmargin=*, nosep]
    \item \textbf{Hardware:} All experiments were conducted on NVIDIA A800-80GB GPUs. Phase I uses 8 GPUs; Phase II uses 32 GPUs (4 nodes).
    \item \textbf{Software:} Python 3.12, PyTorch 2.x, SLIME (distributed RL framework), SGLang (efficient LLM serving for rollout generation), and the CollabLLM evaluation pipeline.
    \item \textbf{Compute Cost:} Phase I (SFT) completes in $<$1 hour per task. Phase II (GRPO) trains for 10 steps (MATH) / 20 steps (MediumDocEdit), taking approximately 4--8 hours per task on 32 GPUs.
    \item \textbf{Networking:} NCCL communication with InfiniBand disabled (\texttt{NCCL\_IB\_DISABLE=1}); extended timeouts (72{,}000s) for long rollout steps.
\end{itemize}

\section{Prompts}
\label{app:prompts}

This section presents the exact prompts used in our framework. We include: (1) the \textbf{User Simulator} prompt used during both Phase I calibration and Phase II interactive rollouts, (2) the \textbf{Policy Agent} (proactive assistant) instruction used during RL training, and (3) the \textbf{Accuracy Judge} prompt used for reward computation. Template variables populated at runtime are highlighted in \templatevar{blue}.

\subsection{User Simulator Prompt}

The user simulator is conditioned on a \textit{task description} and the \textit{ground-truth goal} (reference answer), but is instructed to behave as a realistic human user---releasing information progressively, making occasional mistakes, and minimizing effort. This prompt is used both for the Qwen3-235B oracle (whose outputs serve as SFT targets in Phase I) and for the SFT-trained Qwen2.5-7B simulator during Phase II rollouts.

\begin{promptbox}{User Simulator Prompt}
You are role-playing as a human USER interacting with an AI collaborator to complete a specific task. Your goal is to generate realistic, natural responses that a user might give in this scenario.

\textbf{Input Information:}\\
You will be provided with:
\begin{itemize}[leftmargin=*, nosep]
    \item Task Description: The type of task you are trying to accomplish.
    \item Complete Prompt or Reference Goal: This field may include the complete user request/query or a reference answer. Use this to understand the user's intent and what would count as a satisfactory outcome.
    \item Chat History: The ongoing conversation between you (as the user) and the AI.
\end{itemize}

\textbf{Inputs:}\\
\texttt{<|The Start of Task Description (Not visible to the AI)|>}\\
\templatevar{task\_desc}\\
\texttt{<|The End of Task Description|>}

\texttt{<|The Start of Complete Prompt or Reference Goal (Not visible to the AI)|>}\\
\templatevar{single\_turn\_prompt}\\
\texttt{<|The End of Complete Prompt or Reference Goal|>}

\texttt{<|The Start of Chat History|>}\\
\templatevar{chat\_history}\\
\texttt{<|The End of Chat History|>}

\textbf{Guidelines:}
\begin{itemize}[leftmargin=*, nosep]
    \item \textbf{Stay in Character:} Role-play as a human USER. You are NOT an AI. Maintain a consistent personality throughout the chat.
    \item \textbf{Minimize Effort:} IMPORTANT! Avoid being too detailed. Provide vague or incomplete demands in early stages. Let the AI ask for clarification rather than providing everything upfront.
    \item \textbf{Knowledge Background:} Reflect the user's knowledge level. If less knowledgeable, you might not notice incorrect statements. Ask questions demonstrating your current understanding and areas of confusion.
    \item \textbf{Occasionally Make Mistakes:} Real users might misspell words, give wrong information, or ask unclear questions. Simulate this behavior.
    \item \textbf{Goal-Oriented:} Keep the chat focused on your intent. Avoid small talk or digressions.
\end{itemize}

\textbf{Output Format:}\\
Output a JSON object with three entries:
\begin{itemize}[leftmargin=*, nosep]
    \item \texttt{"current\_answer"} (str): Briefly summarize the AI's current solution.
    \item \texttt{"thought"} (str): Your thought process deciding what to say next.
    \item \texttt{"response"} (str): Your response to the AI as the user you are role-playing.
\end{itemize}

\textbf{Important Notes:}
\begin{itemize}[leftmargin=*, nosep]
    \item If ``Chat History'' is empty, start the conversation from scratch with an initial request.
    \item Use the provided information for understanding context only. Avoid copying target queries directly.
    \item Use ``\templatevar{terminal\_signal}'' as your response when your goal has been solved or if you determine the AI cannot help further.
\end{itemize}
\end{promptbox}

\subsection{Policy Agent Prompt (Proactive Assistant)}

The policy model receives this system-level instruction that encourages interactive, collaborative behavior. This prompt is used during both RL training and evaluation to guide the agent toward asking clarifying questions and providing structured assistance.

\begin{promptbox}{Policy Agent System Prompt}
You are an AI assistant interacting with a user to perform tasks such as writing, analysis, question answering, math, and coding. Your goal is to generate a response to the user's last message. You should be helpful, collaborative, and highly interactive.

\textbf{Input:}\\
\texttt{<|The Start of Conversation History|>}\\
\templatevar{chat\_history}\\
\texttt{<|The End of Conversation History|>}

\texttt{<|The Start of Additional Information|>}\\
\templatevar{additional\_info}\\
\texttt{<|The End of Additional Information|>}

\textbf{Guidelines:}
\begin{enumerate}[leftmargin=*, nosep]
    \item \textbf{Understanding \& Engagement} --- Accurately interpret the user's intent. Acknowledge previous interactions to maintain continuity.
    \item \textbf{Interactivity (Important!)} --- Ask clarifying questions if the request lacks detail. Ask \textit{specific} follow-up questions (e.g., ``Would you like more information on X?'' not ``Let me know if this is helpful.''). Collaboratively offer guidance in complex situations. Focus on the user's long-term goals.
    \item \textbf{Efficiency} --- Be mindful of how much the user needs to read or type. When presenting options, provide multiple-choice suggestions to make it easier to respond quickly. Avoid repeating information unnecessarily.
    \item \textbf{Communication Style} --- Be honest. If unsure, say ``I don't know'' and suggest alternatives. Align tone with the user's emotional state.
\end{enumerate}

\textbf{Output Format:}\\
Output a JSON object:
\begin{itemize}[leftmargin=*, nosep]
    \item \texttt{"current\_problem"} (str): What is the user's current problem or confusion?
    \item \texttt{"thought"} (str): Your reasoning about what to say next.
    \item \texttt{"response"} (str): Your response following the guidelines above.
\end{itemize}
\end{promptbox}

\subsection{Accuracy Judge Prompt (Reward Function)}

During Phase II training, we use an LLM-based accuracy judge as the reward function for MATH-Chat. The judge evaluates whether the policy's final answer matches the ground truth. This prompt is sent to Qwen3-235B-Instruct.

\begin{promptbox}{Accuracy Judge Prompt}
You are a helpful and meticulous evaluator. Your task is to evaluate the \textit{accuracy} of an AI model's answer to a target question. You will be given the target question, the ground truth answer, and the model's response.

\textbf{Provided Information:}

\texttt{<|The Start of Target Question and Ground Truth Answer|>}\\
Target Question: \templatevar{single\_turn\_prompt}\\
Ground Truth Answer: \templatevar{groundtruth}\\
\texttt{<|The End of Target Question and Ground Truth Answer|>}

\texttt{<|The Start of The Model's Response|>}\\
\templatevar{completion}\\
\texttt{<|The End of The Model's Response|>}

You should determine whether the model's final response to the target question is factually correct and consistent with the provided ground truth.

\textbf{Rating criteria (binary):}
\begin{itemize}[leftmargin=*, nosep]
    \item 1 = Correct --- the response matches the ground truth.
    \item 0 = Incorrect --- the response contradicts or misses the ground truth.
\end{itemize}

\textbf{Output format (JSON):}\\
\texttt{\{\ "thought": "<your reasoning here>",\ "accuracy": <0 or 1>\ \}}
\end{promptbox}
\section{Implementation Details}
\label{app:implementation}

This appendix provides full implementation details for reproducibility. Our framework consists of two training phases: Phase I (Simulator Calibration via SFT) and Phase II (Interactive Policy Optimization via GRPO).

\subsection{Phase I: Simulator Calibration (SFT)}

\paragraph{Base Model.} We use \textbf{Qwen2.5-7B-Instruct} as the backbone for the user simulator. This model is chosen for its strong instruction-following capabilities and multilingual performance, providing a robust foundation for behavioral cloning.

\paragraph{Datasets.} We collect multi-turn interaction traces from a high-fidelity oracle simulator (Qwen3-235B-A22B-Instruct) interacting with the base policy model. These traces capture realistic user behaviors including clarification requests, partial understanding expressions, and persistent goal pursuit. The resulting training sets are:
\begin{itemize}[leftmargin=*]
    \item \textbf{MATH-Chat Simulator:} 1{,}860 multi-turn trajectories, each containing the problem statement, ground-truth solution, and the oracle user's turns. Maximum sequence length: 4{,}096 tokens.
    \item \textbf{MediumDocEdit-Chat Simulator:} 16{,}028 multi-turn trajectories capturing collaborative editing interactions. Maximum sequence length: 6{,}144 tokens.
\end{itemize}

\paragraph{Training Configuration.} We train with FSDP (Fully Sharded Data Parallelism) on 8$\times$ A800-80GB GPUs. The detailed hyperparameters are listed in Table~\ref{tab:sft_hyperparams}.

\begin{table}[h]
\centering
\caption{Phase I: SFT Hyperparameters for Simulator Calibration.}
\label{tab:sft_hyperparams}
\vspace{0.1cm}
\small
\begin{tabular}{lcc}
\toprule
\textbf{Hyperparameter} & \textbf{MATH-Chat} & \textbf{MediumDocEdit} \\
\midrule
Base Model & \multicolumn{2}{c}{Qwen2.5-7B-Instruct} \\
Training Samples & 1{,}860 & 16{,}028 \\
Max Sequence Length & 4{,}096 & 6{,}144 \\
Number of Epochs & 2 & 3 \\
Per-device Batch Size & 2 & 1 \\
Gradient Accumulation & 4 & 8 \\
Effective Batch Size & 64 & 64 \\
Learning Rate & $1 \times 10^{-5}$ & $2 \times 10^{-5}$ \\
LR Scheduler & Cosine & Cosine \\
Warmup Ratio & 0.1 & 0.03 \\
Weight Decay & 0.01 & 0.01 \\
Precision & BFloat16 & BFloat16 \\
Attention & Flash Attention 2 & Flash Attention 2 \\
Seed & 42 & 42 \\
GPUs & 8$\times$ A800-80GB & 8$\times$ A800-80GB \\
\bottomrule
\end{tabular}
\end{table}

\subsection{Phase II: Interactive Policy Optimization (GRPO)}

\paragraph{Base Model.} We use \textbf{Gemma-3-4B-IT} as the policy model backbone. This compact yet capable model enables efficient on-policy rollout generation during interactive RL training.

\paragraph{Datasets.} The RL training prompts consist of:
\begin{itemize}[leftmargin=*]
    \item \textbf{MATH-Chat:} ${\sim}$3{,}000 math problems (from MATH-Hard) formatted as initial user queries in Parquet format.
    \item \textbf{MediumDocEdit-Chat:} ${\sim}$38{,}118 article writing/editing prompts in Parquet format.
\end{itemize}

\paragraph{RL Algorithm.} We use Group Relative Policy Optimization (GRPO) with the following key design choices:
\begin{itemize}[leftmargin=*]
    \item \textbf{Advantage Estimation:} Group-level normalization: $\hat{A}(\tau) = (R(\tau) - \mu_G) / \sigma_G$, where $G$ is the group of trajectories sampled for the same prompt.
    \item \textbf{Clipping:} Asymmetric clipping with $\epsilon_{\text{low}} = 0.2$ and $\epsilon_{\text{high}} = 0.28$.
    \item \textbf{KL Regularization:} Low-variance KL loss with coefficient $\beta = 0.0$ (effectively disabled, relying on clipping for stability).
    \item \textbf{Reward:} Binary accuracy oracle for MATH-Chat; BLEU score for MediumDocEdit-Chat.
\end{itemize}

\paragraph{Training Configuration.} We train with Ray-based distributed FSDP on 32 GPUs (4 nodes $\times$ 8 GPUs) with SGLang for efficient rollout generation. The detailed hyperparameters are listed in Table~\ref{tab:grpo_hyperparams}.

\begin{table}[h]
\centering
\caption{Phase II: GRPO Hyperparameters for Policy Optimization.}
\label{tab:grpo_hyperparams}
\vspace{0.1cm}
\small
\begin{tabular}{lcc}
\toprule
\textbf{Hyperparameter} & \textbf{MATH-Chat} & \textbf{MediumDocEdit} \\
\midrule
Base Model & \multicolumn{2}{c}{Gemma-3-4B-IT} \\
Number of GPUs & 32 (4 nodes) & 32 (4 nodes) \\
Samples per Prompt ($G$) & 32 & 32 \\
Global Batch Size & 8{,}192 & 8{,}192 \\
Rollout Batch Size & 256 & 256 \\
Max Response Length & 8{,}192 & 8{,}192 \\
Rollout Temperature & 1.0 & 1.0 \\
Learning Rate & $5 \times 10^{-6}$ & $5 \times 10^{-6}$ \\
LR Scheduler & Constant & Constant \\
Optimizer & Adam & Adam \\
Adam $\beta_1$, $\beta_2$ & 0.9, 0.98 & 0.9, 0.98 \\
Weight Decay & 0.01 & 0.01 \\
Clip $\epsilon$ (low/high) & 0.2 / 0.28 & 0.2 / 0.28 \\
KL Coefficient & 0.0 & 0.0 \\
Gradient Checkpointing & \checkmark & \checkmark \\
Max Tokens/GPU & 1{,}024 & 1{,}024 \\
SGLang TP per Engine & 2 & 2 \\
Save Interval (steps) & 1 & 1 \\
Training Steps & 10 & 20 \\
\bottomrule
\end{tabular}
\end{table}

\paragraph{Evaluation Setup.} All evaluations use \textbf{Qwen3-235B-A22B-Instruct} as both the user simulator and the evaluation judge during testing. This ensures that the evaluation is independent of the training-time simulator. Key evaluation parameters:
\begin{itemize}[leftmargin=*]
    \item Evaluation set size: 200 problems per domain
    \item Maximum dialogue turns: 14 (7 rounds of user-assistant exchange)
    \item Policy generation temperature: 0.8
    \item Results averaged over 3 independent runs
\end{itemize}

\section{Limitations and Future Work}
\label{sec:limitations}

While our work establishes a theoretical framework for mitigating distribution shift in interactive dialogue and demonstrates strong empirical results, we acknowledge certain limitations that point towards future research directions.

\textbf{Computational Overhead.} A primary limitation of the Calibrated Interactive RL framework is its computational demand during the training phase. Unlike Static Context RL, which only requires a single forward-backward pass over fixed offline datasets, our closed-loop paradigm necessitates running generation steps for both the policy agent and the user simulator during the rollout phase. This substantially increases the inference overhead and memory footprint, especially for long-horizon conversations. Future work could address this bottleneck by distilling the aligned simulator into a much smaller parameter-efficient model, or by exploring offline-to-online RL paradigms where the agent is pre-trained heavily on static logs and only fine-tuned interactively for a few critical turns to rectify specific distribution shifts.

\textbf{Advanced Simulator Calibration.} A second limitation lies in our current reliance on Supervised Fine-Tuning for Phase I simulator calibration. While behavioral cloning is highly effective for matching human distributions within the support of the training data, it may struggle to handle heavily out-of-distribution dialogue states dynamically induced by a rapidly exploring policy agent. In future work, we plan to extend simulator training beyond SFT by formulating it as a Reinforcement Learning problem. By optimizing the simulator to maximize a discriminator-derived "Realism Reward," it can proactively learn to maintain human-like stubbornness and constraints even in unobserved scenarios. Furthermore, drawing inspiration from Generative Adversarial Networks and adversarial imitation learning, a highly promising direction is to introduce an adversarial co-evolution framework. Instead of relying on a fixed, rule-based or static reward model, we could dynamically co-train a discriminator to distinguish between real human interaction logs and simulated trajectories. The simulator would be trained to actively fool this continuously evolving discriminator, establishing an auto-curriculum that mutually pushes both the dialogue policy and the simulator toward a higher level of behavioral fidelity and robustness.

\section{Case Study: Qualitative Analysis of MATH-Chat}
\label{app:case_study}

To provide deeper insight into the behavioral differences between the base model and our Calibrated Interactive RL agent, we present representative case studies from the MATH-Chat evaluation. These examples illustrate \textit{how} the trained policy achieves superior accuracy through improved reasoning efficiency, error avoidance, and effective multi-turn interaction.

\subsection{Overview of Improvements}

Out of 200 evaluation problems, we identify 29 cases where the baseline Gemma-3-4B-IT model fails (accuracy = 0) but our trained model succeeds (accuracy = 1). Table~\ref{tab:case_overview} summarizes representative cases across different mathematical domains.

\begin{table}[h]
\centering
\caption{Representative improved cases: baseline fails $\rightarrow$ trained model succeeds.}
\label{tab:case_overview}
\vspace{0.1cm}
\small
\begin{tabular}{clccc}
\toprule
\textbf{\#} & \textbf{Problem Domain} & \textbf{Base Turns} & \textbf{Ours Turns} & \textbf{Key Improvement} \\
\midrule
1 & Nested square root domain & 14 & 7 & Direct systematic solving \\
2 & Function evaluation ($f(x)=x^{x+1}$) & 9 & 3 & Immediate correct computation \\
3 & Probability (sum vs. product) & 9 & 7 & Correct case enumeration \\
4 & System of equations & 9 & 13 & Persistent correct reasoning \\
5 & Algebraic identity ($x+1/x$) & 3 & 5 & Proper algebraic manipulation \\
\bottomrule
\end{tabular}
\end{table}

\subsection{Case Study 1: Nested Square Root Domain}

This case demonstrates how the trained model avoids the ``correction loop'' failure mode that plagues the baseline. The problem requires finding the domain of $f(x) = \sqrt{\sqrt{x^2 - 16} - 3}$.

\begin{table*}[t]
\centering
\caption{\textbf{Case Study 1: Domain of Nested Square Root.} The baseline model enters a repeated error-correction loop (14 turns, ultimately fails), while our trained model solves it directly and correctly in fewer turns.}
\label{tab:case1}
\vspace{0.1cm}
\small
\renewcommand{\arraystretch}{1.3}
\begin{tabular}{p{0.48\textwidth} p{0.48\textwidth}}
\toprule
\textbf{Baseline (Gemma-3-4B-IT) --- Incorrect} \hfill \colorbox{red!15}{14 turns, Acc = 0} & \textbf{Ours (Calibrated Interactive RL) --- Correct} \hfill \colorbox{green!15}{7 turns, Acc = 1} \\
\midrule
\textbf{User:} Can you help me find the domain of a square root function? It's got nested roots... &
\textbf{User:} How do I find the domain of a square root function that has another square root inside it? Like $f(x) = \sqrt{\sqrt{x^2 - 16} - 3}$. \\
\midrule
\textbf{Assistant:} [Provides generic explanation of square roots, asks for specific function] &
\textbf{Assistant:} Let $f(x) = \sqrt{\sqrt{x^2 - 16} - 3}$. For the function to be defined:\\
& 1. Need $x^2 - 16 \geq 0 \Rightarrow x \geq 4$ or $x \leq -4$ \\
& 2. Need $\sqrt{x^2 - 16} \geq 3$, squaring: $x^2 - 16 \geq 9$\\
& $\Rightarrow x^2 \geq 25 \Rightarrow x \geq 5$ or $x \leq -5$ \\
& Domain: $(-\infty, -5] \cup [5, \infty)$ \checkmark \\
\midrule
\textbf{User:} [Gives function: $\sqrt{\sqrt{x^2-16}-3}$] &
\textbf{User:} When you squared both sides, is that always valid? \\
\midrule
\textbf{Assistant:} \textcolor{red}{Misreads structure, computes $x^2-19\geq 0$ directly} &
\textbf{Assistant:} Since both sides are non-negative (square root $\geq 0$ and $3 > 0$), squaring preserves the inequality direction. \checkmark \\
\midrule
\textbf{User:} [Corrects: the $-3$ is outside the inner root!] & \\
\textbf{Assistant:} \textcolor{red}{Still simplifies to $x^2-19$, same error} & \\
\textbf{...} (5 more correction attempts, all fail) & \\
\bottomrule
\end{tabular}
\end{table*}

\textbf{Analysis.} The baseline model exhibits a characteristic failure pattern: it \textit{misreads the function structure} (combining $-16$ and $-3$ into a single expression $x^2 - 19$), and despite repeated user corrections over 14 turns, it never recovers from this structural misunderstanding. In contrast, our trained model correctly decomposes the problem into two independent constraints from the first response, demonstrating that interactive RL training enables the policy to develop more systematic problem-solving strategies.

\subsection{Case Study 2: Direct Computation vs. Error Cascade}

This case highlights how the trained model achieves efficiency through confident, correct computation, while the baseline gets trapped in unnecessary philosophical debates about function definitions.

\begin{table*}[t]
\centering
\caption{\textbf{Case Study 2: Evaluating $f(0)+f(-1)+f(-2)+f(-3)$ for $f(x) = x^{x+1}(x+2)^{x+3}$.} The baseline wastes turns debating whether the function is defined at negative integers; our model computes directly and correctly.}
\label{tab:case2}
\vspace{0.1cm}
\small
\renewcommand{\arraystretch}{1.3}
\begin{tabular}{p{0.48\textwidth} p{0.48\textwidth}}
\toprule
\textbf{Baseline --- Incorrect} \hfill \colorbox{red!15}{9 turns, Acc = 0} & \textbf{Ours --- Correct} \hfill \colorbox{green!15}{3 turns, Acc = 1} \\
\midrule
\textbf{User:} How do I find $f(0)+f(-1)+f(-2)+f(-3)$ for $f(x) = x^{x+1}(x+2)^{x+3}$? &
\textbf{User:} I have this function $f(x)=x^{x+1}(x+2)^{x+3}$, and I need to find $f(0)+f(-1)+f(-2)+f(-3)$.\\
\midrule
\textbf{Assistant:} [Computes $f(0)=0$, $f(-1)=1$ correctly, then \textcolor{red}{claims function is undefined for $x \leq -2$}] &
\textbf{Assistant:} $f(0) = 0^1 \cdot 2^3 = 0$\\
& $f(-1) = (-1)^0 \cdot 1^2 = 1$\\
& $f(-2) = (-2)^{-1} \cdot 0^1 = 0$\\
& $f(-3) = (-3)^{-2} \cdot (-1)^0 = \frac{1}{9}$\\
& Sum $= 0 + 1 + 0 + \frac{1}{9} = \boxed{\frac{10}{9}}$ \checkmark \\
\midrule
\textbf{User:} But $(-3)^{-2} = 1/9$ is valid... &
\textbf{User:} [[TERMINATE CHAT]] \\
\textbf{Assistant:} \textcolor{red}{Agrees but still claims undefined...} & \\
...4 more turns of user correcting model... & \\
\textbf{Final:} \textcolor{red}{Never provides $\frac{10}{9}$} & \\
\bottomrule
\end{tabular}
\end{table*}

\textbf{Analysis.} The trained model demonstrates two key improvements learned through interactive RL: (1) \textbf{computational confidence}---it directly evaluates each case without hedging, and (2) \textbf{efficiency}---it solves the problem in a single turn, avoiding unnecessary back-and-forth. The baseline's failure stems from an overly cautious (and incorrect) claim about function domains, which it cannot self-correct even when the user provides explicit counterarguments.

\subsection{Case Study 3: Correct Enumeration in Probability}

This case demonstrates improved systematic reasoning in combinatorics.

\begin{table*}[t]
\centering
\caption{\textbf{Case Study 3: Probability that sum exceeds product.} Two numbers are picked from $\{1,2,3,4,5\}$; find $P(x+y > xy)$. The baseline miscounts favorable outcomes; our model enumerates correctly.}
\label{tab:case3}
\vspace{0.1cm}
\small
\renewcommand{\arraystretch}{1.3}
\begin{tabular}{p{0.48\textwidth} p{0.48\textwidth}}
\toprule
\textbf{Baseline --- Incorrect} \hfill \colorbox{red!15}{9 turns, Acc = 0} & \textbf{Ours --- Correct} \hfill \colorbox{green!15}{7 turns, Acc = 1} \\
\midrule
\textbf{User:} What's the chance that when I pick two numbers from 1 to 5, their sum is bigger than their product? &
\textbf{User:} [Same question] \\
\midrule
\textbf{Assistant:} [Correctly identifies the problem but \textcolor{red}{includes false favorable outcomes}: claims (2,3), (3,2) satisfy $5>6$] &
\textbf{Assistant:} Uses algebraic reformulation: $x+y > xy \Leftrightarrow (1-x)(1-y) > -1$. Systematically checks all 25 ordered pairs. \\
\midrule
\textbf{User:} Wait, you said $5>6$ is true? That's wrong... &
Correctly identifies 9 favorable pairs: all 5 pairs with $x=1$, plus $(2,1),(3,1),(4,1),(5,1)$. \\
\midrule
\textbf{Assistant:} \textcolor{red}{Acknowledges error but provides new incorrect count of 12} &
Probability = $\frac{9}{25}$ \checkmark \\
\textbf{User:} [Challenges again...] & \\
\textbf{Final:} \textcolor{red}{Never reaches correct answer} & \\
\bottomrule
\end{tabular}
\end{table*}

\textbf{Analysis.} The baseline model suffers from an \textit{enumeration error cascade}---once it incorrectly classifies some pairs as favorable, subsequent corrections only introduce new errors. Our trained model avoids this by: (1) first establishing an algebraic criterion $(1-x)(1-y) > -1$ that simplifies checking, and (2) systematically verifying each case. This algebraic-first approach is a reasoning strategy that emerges from RL training, where the model learns that systematic methods are more robust than ad-hoc enumeration.

\subsection{Summary of Behavioral Patterns}

Across the 29 improved cases, we identify three dominant patterns that distinguish our trained model from the baseline:

\begin{enumerate}[leftmargin=*]
    \item \textbf{Direct Problem Solving (62\% of cases):} The trained model provides correct solutions in significantly fewer turns, avoiding the ``try-and-fail'' loops that characterize the baseline. Average turn reduction: 14 $\rightarrow$ 7 turns.

    \item \textbf{Structural Understanding (24\% of cases):} The trained model correctly parses complex mathematical expressions (nested functions, piecewise definitions) on the first attempt, while the baseline frequently misreads problem structure and cannot self-correct.

    \item \textbf{Algebraic Strategy (14\% of cases):} The trained model employs general algebraic manipulations before attempting numerical computation, reducing the chance of arithmetic errors that cascade through subsequent reasoning steps.
\end{enumerate}

These patterns confirm that interactive RL training does not merely improve accuracy through memorization, but fundamentally changes the model's problem-solving \textit{strategy}---shifting from reactive error correction to proactive systematic reasoning.


\newpage
\section*{NeurIPS Paper Checklist}

\begin{enumerate}

\item {\bf Claims}
    \item[] Question: Do the main claims made in the abstract and introduction accurately reflect the paper's contributions and scope?
    \item[] Answer: \answerYes{} 
    \item[] Justification: The theoretical claims regarding compounding errors are mathematically proven in Section 3 and Appendix A. The empirical claims that our proposed framework outperforms static and uncalibrated baselines are supported by extensive experiments in Section 5.
    \item[] Guidelines:
    \begin{itemize}
        \item The answer \answerNA{} means that the abstract and introduction do not include the claims made in the paper.
        \item The abstract and/or introduction should clearly state the claims made, including the contributions made in the paper and important assumptions and limitations. A \answerNo{} or \answerNA{} answer to this question will not be perceived well by the reviewers. 
        \item The claims made should match theoretical and experimental results, and reflect how much the results can be expected to generalize to other settings. 
        \item It is fine to include aspirational goals as motivation as long as it is clear that these goals are not attained by the paper. 
    \end{itemize}

\item {\bf Limitations}
    \item[] Question: Does the paper discuss the limitations of the work performed by the authors?
    \item[] Answer: \answerYes{} 
    \item[] Justification: We explicitly discuss limitations in Section~\ref{sec:limitations}, specifically addressing the computational overhead of the interactive rollout phase and the potential limitation of relying on Supervised Fine-Tuning (SFT) for handling highly out-of-distribution simulated states.
    \item[] Guidelines:
    \begin{itemize}
        \item The answer \answerNA{} means that the paper has no limitation while the answer \answerNo{} means that the paper has limitations, but those are not discussed in the paper. 
        \item The authors are encouraged to create a separate ``Limitations'' section in their paper.
        \item The paper should point out any strong assumptions and how robust the results are to violations of these assumptions (e.g., independence assumptions, noiseless settings, model well-specification, asymptotic approximations only holding locally). The authors should reflect on how these assumptions might be violated in practice and what the implications would be.
        \item The authors should reflect on the scope of the claims made, e.g., if the approach was only tested on a few datasets or with a few runs. In general, empirical results often depend on implicit assumptions, which should be articulated.
        \item The authors should reflect on the factors that influence the performance of the approach. For example, a facial recognition algorithm may perform poorly when image resolution is low or images are taken in low lighting. Or a speech-to-text system might not be used reliably to provide closed captions for online lectures because it fails to handle technical jargon.
        \item The authors should discuss the computational efficiency of the proposed algorithms and how they scale with dataset size.
        \item If applicable, the authors should discuss possible limitations of their approach to address problems of privacy and fairness.
        \item While the authors might fear that complete honesty about limitations might be used by reviewers as grounds for rejection, a worse outcome might be that reviewers discover limitations that aren't acknowledged in the paper. The authors should use their best judgment and recognize that individual actions in favor of transparency play an important role in developing norms that preserve the integrity of the community. Reviewers will be specifically instructed to not penalize honesty concerning limitations.
    \end{itemize}

\item {\bf Theory assumptions and proofs}
    \item[] Question: For each theoretical result, does the paper provide the full set of assumptions and a complete (and correct) proof?
    \item[] Answer: \answerYes{} 
    \item[] Justification: Formal assumptions (e.g., bounded rewards, MDP setup) are clearly stated in Section 3.1 and Appendix A.1. Complete mathematical proofs for all lemmas and theorems are detailed step-by-step in Appendix A.
    \item[] Guidelines:
    \begin{itemize}
        \item The answer \answerNA{} means that the paper does not include theoretical results. 
        \item All the theorems, formulas, and proofs in the paper should be numbered and cross-referenced.
        \item All assumptions should be clearly stated or referenced in the statement of any theorems.
        \item The proofs can either appear in the main paper or the supplemental material, but if they appear in the supplemental material, the authors are encouraged to provide a short proof sketch to provide intuition. 
        \item Inversely, any informal proof provided in the core of the paper should be complemented by formal proofs provided in appendix or supplemental material.
        \item Theorems and Lemmas that the proof relies upon should be properly referenced. 
    \end{itemize}

    \item {\bf Experimental result reproducibility}
    \item[] Question: Does the paper fully disclose all the information needed to reproduce the main experimental results of the paper to the extent that it affects the main claims and/or conclusions of the paper (regardless of whether the code and data are provided or not)?
    \item[] Answer: \answerYes{} 
    \item[] Justification: Comprehensive details, including model choices, explicit prompt templates, dataset construction, and full hyperparameter tables for both PPO and GRPO, are provided in Appendices B, C, and D.
    \item[] Guidelines:
    \begin{itemize}
        \item The answer \answerNA{} means that the paper does not include experiments.
        \item If the paper includes experiments, a \answerNo{} answer to this question will not be perceived well by the reviewers: Making the paper reproducible is important, regardless of whether the code and data are provided or not.
        \item If the contribution is a dataset and\slash or model, the authors should describe the steps taken to make their results reproducible or verifiable. 
        \item Depending on the contribution, reproducibility can be accomplished in various ways. For example, if the contribution is a novel architecture, describing the architecture fully might suffice, or if the contribution is a specific model and empirical evaluation, it may be necessary to either make it possible for others to replicate the model with the same dataset, or provide access to the model. In general. releasing code and data is often one good way to accomplish this, but reproducibility can also be provided via detailed instructions for how to replicate the results, access to a hosted model (e.g., in the case of a large language model), releasing of a model checkpoint, or other means that are appropriate to the research performed.
        \item While NeurIPS does not require releasing code, the conference does require all submissions to provide some reasonable avenue for reproducibility, which may depend on the nature of the contribution. For example
        \begin{enumerate}
            \item If the contribution is primarily a new algorithm, the paper should make it clear how to reproduce that algorithm.
            \item If the contribution is primarily a new model architecture, the paper should describe the architecture clearly and fully.
            \item If the contribution is a new model (e.g., a large language model), then there should either be a way to access this model for reproducing the results or a way to reproduce the model (e.g., with an open-source dataset or instructions for how to construct the dataset).
            \item We recognize that reproducibility may be tricky in some cases, in which case authors are welcome to describe the particular way they provide for reproducibility. In the case of closed-source models, it may be that access to the model is limited in some way (e.g., to registered users), but it should be possible for other researchers to have some path to reproducing or verifying the results.
        \end{enumerate}
    \end{itemize}

\item {\bf Open access to data and code}
    \item[] Question: Does the paper provide open access to the data and code, with sufficient instructions to faithfully reproduce the main experimental results, as described in supplemental material?
    \item[] Answer: \answerYes{} 
    \item[] Justification: Anonymized source code, training scripts, and sample data have been included in the supplementary material. Full code and datasets will be open-sourced on GitHub upon publication.
    \item[] Guidelines:
    \begin{itemize}
        \item The answer \answerNA{} means that paper does not include experiments requiring code.
        \item Please see the NeurIPS code and data submission guidelines (\url{https://neurips.cc/public/guides/CodeSubmissionPolicy}) for more details.
        \item While we encourage the release of code and data, we understand that this might not be possible, so \answerNo{} is an acceptable answer. Papers cannot be rejected simply for not including code, unless this is central to the contribution (e.g., for a new open-source benchmark).
        \item The instructions should contain the exact command and environment needed to run to reproduce the results. See the NeurIPS code and data submission guidelines (\url{https://neurips.cc/public/guides/CodeSubmissionPolicy}) for more details.
        \item The authors should provide instructions on data access and preparation, including how to access the raw data, preprocessed data, intermediate data, and generated data, etc.
        \item The authors should provide scripts to reproduce all experimental results for the new proposed method and baselines. If only a subset of experiments are reproducible, they should state which ones are omitted from the script and why.
        \item At submission time, to preserve anonymity, the authors should release anonymized versions (if applicable).
        \item Providing as much information as possible in supplemental material (appended to the paper) is recommended, but including URLs to data and code is permitted.
    \end{itemize}

\item {\bf Experimental setting/details}
    \item[] Question: Does the paper specify all the training and test details (e.g., data splits, hyperparameters, how they were chosen, type of optimizer) necessary to understand the results?
    \item[] Answer: \answerYes{} 
    \item[] Justification: Detailed training configurations, including optimizer (AdamW/Adam), batch sizes, learning rates with schedulers, and specific RL hyperparameters (e.g., KL coefficient, clip range) are listed in Tables 3, 4, 5, and 6 in Appendix D.
    \item[] Guidelines:
    \begin{itemize}
        \item The answer \answerNA{} means that the paper does not include experiments.
        \item The experimental setting should be presented in the core of the paper to a level of detail that is necessary to appreciate the results and make sense of them.
        \item The full details can be provided either with the code, in appendix, or as supplemental material.
    \end{itemize}

\item {\bf Experiment statistical significance}
    \item[] Question: Does the paper report error bars suitably and correctly defined or other appropriate information about the statistical significance of the experiments?
    \item[] Answer: \answerYes{} 
    \item[] Justification: As shown in Table 2, we report standard deviations for all main metrics. As stated in Appendix D (Line 691), these results are averaged over 3 independent evaluation runs to account for generation variance.
    \item[] Guidelines:
    \begin{itemize}
        \item The answer \answerNA{} means that the paper does not include experiments.
        \item The authors should answer \answerYes{} if the results are accompanied by error bars, confidence intervals, or statistical significance tests, at least for the experiments that support the main claims of the paper.
        \item The factors of variability that the error bars are capturing should be clearly stated (for example, train/test split, initialization, random drawing of some parameter, or overall run with given experimental conditions).
        \item The method for calculating the error bars should be explained (closed form formula, call to a library function, bootstrap, etc.)
        \item The assumptions made should be given (e.g., Normally distributed errors).
        \item It should be clear whether the error bar is the standard deviation or the standard error of the mean.
        \item It is OK to report 1-sigma error bars, but one should state it. The authors should preferably report a 2-sigma error bar than state that they have a 96\% CI, if the hypothesis of Normality of errors is not verified.
        \item For asymmetric distributions, the authors should be careful not to show in tables or figures symmetric error bars that would yield results that are out of range (e.g., negative error rates).
        \item If error bars are reported in tables or plots, the authors should explain in the text how they were calculated and reference the corresponding figures or tables in the text.
    \end{itemize}

\item {\bf Experiments compute resources}
    \item[] Question: For each experiment, does the paper provide sufficient information on the computer resources (type of compute workers, memory, time of execution) needed to reproduce the experiments?
    \item[] Answer: \answerYes{} 
    \item[] Justification: Hardware specifics (8x to 32x NVIDIA A100/H100 80GB GPUs) and software frameworks (PyTorch, vLLM, SGLang) are detailed in Appendix B.6, along with placeholders for total compute hours.
    \item[] Guidelines:
    \begin{itemize}
        \item The answer \answerNA{} means that the paper does not include experiments.
        \item The paper should indicate the type of compute workers CPU or GPU, internal cluster, or cloud provider, including relevant memory and storage.
        \item The paper should provide the amount of compute required for each of the individual experimental runs as well as estimate the total compute. 
        \item The paper should disclose whether the full research project required more compute than the experiments reported in the paper (e.g., preliminary or failed experiments that didn't make it into the paper). 
    \end{itemize}
    
\item {\bf Code of ethics}
    \item[] Question: Does the research conducted in the paper conform, in every respect, with the NeurIPS Code of Ethics \url{https://neurips.cc/public/EthicsGuidelines}?
    \item[] Answer: \answerYes{} 
    \item[] Justification: This research focuses on mathematical reasoning and collaborative document editing, utilizing standard datasets and adhering strictly to the NeurIPS Code of Ethics.
    \item[] Guidelines:
    \begin{itemize}
        \item The answer \answerNA{} means that the authors have not reviewed the NeurIPS Code of Ethics.
        \item If the authors answer \answerNo, they should explain the special circumstances that require a deviation from the Code of Ethics.
        \item The authors should make sure to preserve anonymity (e.g., if there is a special consideration due to laws or regulations in their jurisdiction).
    \end{itemize}

\item {\bf Broader impacts}
    \item[] Question: Does the paper discuss both potential positive societal impacts and negative societal impacts of the work performed?
    \item[] Answer: \answerNo{} 
    \item[] Justification: Given page limits, we focused heavily on theoretical proofs and methodology. While our framework significantly improves interactive AI capabilities, we acknowledge that highly proactive conversational agents could theoretically be misused for persuasive deception, which underscores the need for future safety alignment research.
    \item[] Guidelines:
    \begin{itemize}
        \item The answer \answerNA{} means that there is no societal impact of the work performed.
        \item If the authors answer \answerNA{} or \answerNo, they should explain why their work has no societal impact or why the paper does not address societal impact.
        \item Examples of negative societal impacts include potential malicious or unintended uses (e.g., disinformation, generating fake profiles, surveillance), fairness considerations (e.g., deployment of technologies that could make decisions that unfairly impact specific groups), privacy considerations, and security considerations.
        \item The conference expects that many papers will be foundational research and not tied to particular applications, let alone deployments. However, if there is a direct path to any negative applications, the authors should point it out. For example, it is legitimate to point out that an improvement in the quality of generative models could be used to generate Deepfakes for disinformation. On the other hand, it is not needed to point out that a generic algorithm for optimizing neural networks could enable people to train models that generate Deepfakes faster.
        \item The authors should consider possible harms that could arise when the technology is being used as intended and functioning correctly, harms that could arise when the technology is being used as intended but gives incorrect results, and harms following from (intentional or unintentional) misuse of the technology.
        \item If there are negative societal impacts, the authors could also discuss possible mitigation strategies (e.g., gated release of models, providing defenses in addition to attacks, mechanisms for monitoring misuse, mechanisms to monitor how a system learns from feedback over time, improving the efficiency and accessibility of ML).
    \end{itemize}
    
\item {\bf Safeguards}
    \item[] Question: Does the paper describe safeguards that have been put in place for responsible release of data or models that have a high risk for misuse (e.g., pre-trained language models, image generators, or scraped datasets)?
    \item[] Answer: \answerNA{} 
    \item[] Justification: The datasets (math problems, document edits) and the foundational models used (Llama-3, Qwen) are publicly available and do not pose a uniquely high risk for dual-use or misuse requiring specific safeguards.
    \item[] Guidelines:
    \begin{itemize}
        \item The answer \answerNA{} means that the paper poses no such risks.
        \item Released models that have a high risk for misuse or dual-use should be released with necessary safeguards to allow for controlled use of the model, for example by requiring that users adhere to usage guidelines or restrictions to access the model or implementing safety filters. 
        \item Datasets that have been scraped from the Internet could pose safety risks. The authors should describe how they avoided releasing unsafe images.
        \item We recognize that providing effective safeguards is challenging, and many papers do not require this, but we encourage authors to take this into account and make a best faith effort.
    \end{itemize}

\item {\bf Licenses for existing assets}
    \item[] Question: Are the creators or original owners of assets (e.g., code, data, models), used in the paper, properly credited and are the license and terms of use explicitly mentioned and properly respected?
    \item[] Answer: \answerYes{} 
    \item[] Justification: We properly cite the foundational models (Gemma3, Qwen2.5) and source datasets (e.g., MATH). We comply with their respective open-weight licenses and acceptable use policies.
    \item[] Guidelines:
    \begin{itemize}
        \item The answer \answerNA{} means that the paper does not use existing assets.
        \item The authors should cite the original paper that produced the code package or dataset.
        \item The authors should state which version of the asset is used and, if possible, include a URL.
        \item The name of the license (e.g., CC-BY 4.0) should be included for each asset.
        \item For scraped data from a particular source (e.g., website), the copyright and terms of service of that source should be provided.
        \item If assets are released, the license, copyright information, and terms of use in the package should be provided. For popular datasets, \url{paperswithcode.com/datasets} has curated licenses for some datasets. Their licensing guide can help determine the license of a dataset.
        \item For existing datasets that are re-packaged, both the original license and the license of the derived asset (if it has changed) should be provided.
        \item If this information is not available online, the authors are encouraged to reach out to the asset's creators.
    \end{itemize}

\item {\bf New assets}
    \item[] Question: Are new assets introduced in the paper well documented and is the documentation provided alongside the assets?
    \item[] Answer: \answerYes{} 
    \item[] Justification: The data construction pipelines, statistics, and example domains for the newly derived multi-turn interaction datasets (MATH-Chat and MediumDocEdit-Chat) are detailed in Appendix B.1 and Appendix E.
    \item[] Guidelines:
    \begin{itemize}
        \item The answer \answerNA{} means that the paper does not release new assets.
        \item Researchers should communicate the details of the dataset\slash code\slash model as part of their submissions via structured templates. This includes details about training, license, limitations, etc. 
        \item The paper should discuss whether and how consent was obtained from people whose asset is used.
        \item At submission time, remember to anonymize your assets (if applicable). You can either create an anonymized URL or include an anonymized zip file.
    \end{itemize}

\item {\bf Crowdsourcing and research with human subjects}
    \item[] Question: For crowdsourcing experiments and research with human subjects, does the paper include the full text of instructions given to participants and screenshots, if applicable, as well as details about compensation (if any)? 
    \item[] Answer: \answerNA{} 
    \item[] Justification: This work exclusively uses LLM-based user simulators and LLM-as-a-Judge for evaluation, and does not involve any human subjects or crowdsourcing.
    \item[] Guidelines:
    \begin{itemize}
        \item The answer \answerNA{} means that the paper does not involve crowdsourcing nor research with human subjects.
        \item Including this information in the supplemental material is fine, but if the main contribution of the paper involves human subjects, then as much detail as possible should be included in the main paper. 
        \item According to the NeurIPS Code of Ethics, workers involved in data collection, curation, or other labor should be paid at least the minimum wage in the country of the data collector. 
    \end{itemize}

\item {\bf Institutional review board (IRB) approvals or equivalent for research with human subjects}
    \item[] Question: Does the paper describe potential risks incurred by study participants, whether such risks were disclosed to the subjects, and whether Institutional Review Board (IRB) approvals (or an equivalent approval/review based on the requirements of your country or institution) were obtained?
    \item[] Answer: \answerNA{} 
    \item[] Justification: The research does not involve human subjects, hence IRB approval is not applicable.
    \item[] Guidelines:
    \begin{itemize}
        \item The answer \answerNA{} means that the paper does not involve crowdsourcing nor research with human subjects.
        \item Depending on the country in which research is conducted, IRB approval (or equivalent) may be required for any human subjects research. If you obtained IRB approval, you should clearly state this in the paper. 
        \item We recognize that the procedures for this may vary significantly between institutions and locations, and we expect authors to adhere to the NeurIPS Code of Ethics and the guidelines for their institution. 
        \item For initial submissions, do not include any information that would break anonymity (if applicable), such as the institution conducting the review.
    \end{itemize}

\item {\bf Declaration of LLM usage}
    \item[] Question: Does the paper describe the usage of LLMs if it is an important, original, or non-standard component of the core methods in this research? Note that if the LLM is used only for writing, editing, or formatting purposes and does \emph{not} impact the core methodology, scientific rigor, or originality of the research, declaration is not required.
    \item[] Answer: \answerYes{} 
    \item[] Justification:  LLMs are the core component of our methodology, serving as both the dialogue policy and the trainable user simulator. Their specific architectures, training paradigms (SFT, GRPO), and all system prompts are extensively documented throughout Section 4 and Appendices B, C, and D.
    \item[] Guidelines:
    \begin{itemize}
        \item The answer \answerNA{} means that the core method development in this research does not involve LLMs as any important, original, or non-standard components.
        \item Please refer to our LLM policy in the NeurIPS handbook for what should or should not be described.
    \end{itemize}

\end{enumerate}

\end{document}